\DeclareMathOperator{\E}{\mathbb{E}}
\DeclareMathOperator*{\argmax}{arg\,max}
\definecolor{perfblue}{RGB}{64, 114, 175}
\newcommand{\algcommentlight}[1]{\textcolor{perfblue}{\transparent{0.8}\small{\texttt{\textbf{//\hspace{2pt}#1}}}}} 
\definecolor{lightgreen}{rgb}{0.8,1,0.8}
\newcommand{\e}[1]{{\color{black}{#1}}} 
\newcommand{\apricot}[0]{\texttt{APRICOT}}
\newcommand{\feasonly}[0]{\texttt{Feasibility-Only}}
\newcommand{\prefonly}[0]{\texttt{Preference-Only} }
\newcommand{\noninteract}[0]{\texttt{Non-Interactive}}
\newcommand{\llmqa}[0]{\texttt{LLM-Q/A}}
\newcommand{\candllmqa}[0]{\texttt{Cand+LLM-Q/A}}
\newcommand{\gdclip}[0]{\texttt{GD+CLIP}}
\newcommand{\gd}[0]{\texttt{GD-Only}}
\theoremstyle{plain}
\newtheorem{theorem}{Theorem}[section]
\theoremstyle{definition}
\theoremstyle{remark}
\title{APRICOT: Active Preference Learning and Constraint-Aware Task Planning with LLMs}
\author{
Huaxiaoyue Wang$^1$, Nathaniel Chin$^1$, Gonzalo Gonzalez-Pumariega$^1$, Xiangwan Sun$^1$, \\
\bf Neha Sunkara$^1$, Maximus Adrian Pace$^1$, Jeannette Bohg$^2$, Sanjiban Choudhury$^1$
\\$^1$Cornell University, $^2$Stanford University}
\def\thanks#1{\protected@xdef\@thanks{\@thanks \protect\footnotetext{#1}}}
\thanks{Correspondence to: Huaxiaoyue Wang, \href{mailto:yukiwang@cs.cornell.edu}{yukiwang@cs.cornell.edu}}
\begin{document}
\doparttoc 
\faketableofcontents 

\maketitle


\begin{abstract}
Home robots performing personalized tasks must adeptly balance user preferences with environmental affordances.
We focus on organization tasks within constrained spaces, such as arranging items into a refrigerator, where preferences for placement collide with physical limitations.
The robot must infer user preferences based on a small set of demonstrations, which is easier for users to provide than extensively defining all their requirements.
While recent works use Large Language Models (LLMs) to learn preferences from user demonstrations, they encounter two fundamental challenges.
First, there is inherent ambiguity in interpreting user actions, as multiple preferences can often explain a single observed behavior.
Second, not all user preferences are practically feasible due to geometric constraints in the environment.
To address these challenges, we introduce \apricot{}, a novel approach that merges LLM-based Bayesian active preference learning with constraint-aware task planning. 
\apricot{} refines its generated preferences by actively querying the user and dynamically adapts its plan to respect environmental constraints.
We evaluate \apricot{} on a dataset of diverse organization tasks and demonstrate its effectiveness in real-world scenarios, showing significant improvements in both preference satisfaction and plan feasibility. 
The project website is at \url{https://portal-cornell.github.io/apricot/}

\end{abstract}

\keywords{Active Preference Learning, Task Planning, Large Language Models} 

\section{Introduction}

For robots to perform personalized household tasks, they need to balance a user's preference with constraints in the user's household. 
Concretely, organizational tasks, such as putting away grocery items into a refrigerator, require the robot to place items based on where the user prefers and avoid colliding with the fridge or knocking other items over. 

Large Language Models (LLMs) provide an effective interface for users to communicate what they want via natural language~\cite{ahn2022i, liang2023code, li2023interactive}, but carefully articulating the task and their specific preferences can also become tedious. 
Conversely, it is often more straightforward for users to provide demonstrations of the task, from which Vision-Language Models (VLMs) can extract relevant states and actions.
Thus, recent works~\cite{wang2023demo2code, Wu_2023, gao2024aligning, han2024llmpersonalize} have studied using LLMs to infer user preferences from a small set of such demonstrations.
However, current approaches face two challenges. 
First, multiple preferences can consistently explain user behaviors in demonstrations. Randomly choosing one preference to generate a plan will fail to solve unseen initial conditions. 
Second, environmental constraints can invalidate some placement locations that satisfy the preference.
Naively converting a preference to placement locations can cause the robot to collide with objects in the environment. 

\textbf{\emph{Our key insight is to close the loop on LLMs, enabling them to refine preferences and plans through efficient interactions with both the user and the environment.}}
We combine the generative question-asking capabilities of LLMs with Bayesian Active Learning~\cite{golovin2013nearoptimal} to explicitly capture uncertainty in user preferences and ask informative questions that collapse uncertainty. 
We also combine the generative planning capabilities of LLMs with feedback from the world model to iteratively improve plans, ensuring they respect environmental constraints while satisfying user preferences.

We present \apricot{} (Active Preference Learning with Constraint-Aware Task Planner) that uses: (1) a VLM to convert visual demonstrations into language-based demonstrations;
(2) an LLM-based Bayesian active preference learning module to ask the user a small number of questions and identify the preference that most closely approximates the user's preference; 
(3) a task planner that refines its robot task plan based on feedback from world models to satisfy preferences and respect environmental constraints;
(4) a real robot system to execute the generated plan.
Our contributions are:
\begin{itemize}[leftmargin=*]
    \item A new algorithm for LLM-based Bayesian active preference learning that can efficiently learn a preference from a small set of demonstrations and minimal online user-querying.
    \item A real robot system with a constraint-aware task planner that can generate and execute plans that satisfy user preferences and respect environmental constraints. 
    \item Evaluations of (1) the active preference learning approach on a benchmark dataset of 50 different ground-truth preferences and 100 test cases, and (2) real robot experiments on 9 scenarios. 
\end{itemize}

\begin{figure}[!t]
    \centering
    \includegraphics[width=\linewidth]{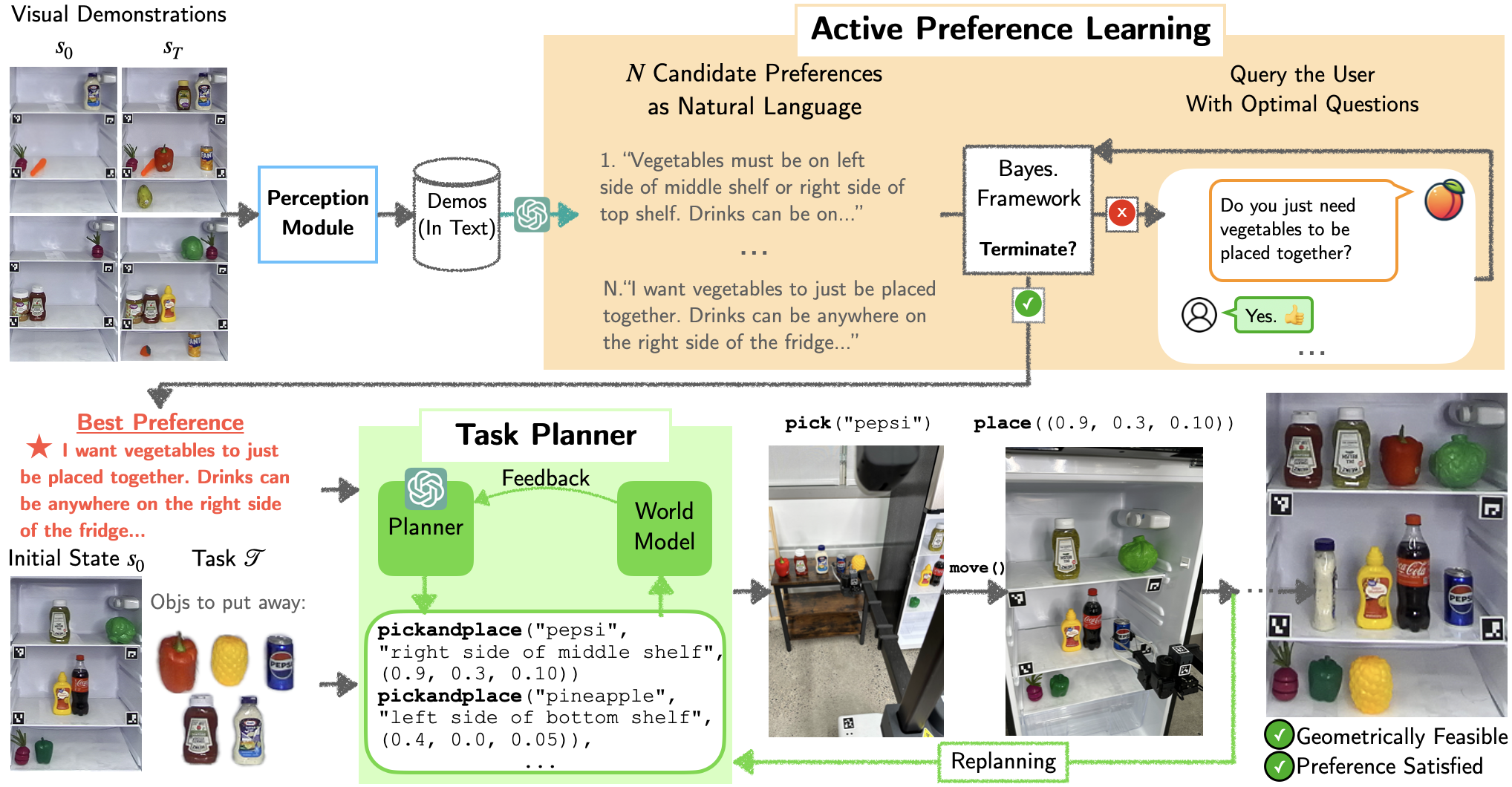}
    \vspace{-5mm}
    \captionsetup{width=\textwidth}
    \caption{\textbf{Overview of \apricot{}} that (1) converts user visual demonstrations into language-based demonstrations, (2) given demonstrations, determines the preference that best approximates the ground-truth user preference by minimally querying the user, (3) generates and refines a plan based on world models' feedback to satisfy preferences and respect constraints, (4) executes the plan in a real robot system. \vspace{-1em}}
    \label{fig:overall_pipeline}
\end{figure}
\vspace{-1em}
\section{Related Works}
\vspace{-1em}
\textbf{Active Preference Learning. } 
Active preference learning focuses on estimating user preferences or reward functions by asking the user a set of queries.
In robotics, classical approaches focus on synthesizing the optimal comparative queries (e.g., pairs of trajectories/features)~\cite{Akrour2012april, sadigh2017active, pmlr-v87-biyik18a, basu2019active, NIPS2012_16c222aa, katz2019learning, wilde2020userspec, wilde2020active, basu2018learning, biyik2019asking, holladay2016active}.
A subgroup of works~\cite{palan2019learning, biyik2022learning, avaei2023incremental, wang2022feedback} explores using teleoperated demonstrations to learn a prior over reward functions to expedite the process.
In contrast, our work uses visual demonstrations.
Instead of using a linear combination of features, we choose to represent preferences as natural language because language can capture complex preferences.
Prior works~\cite{Wu_2023, wang2023demo2code} have also studied using LLMs to infer user preferences based on demonstrations, but they lack an interactive mechanism to refine incorrect preferences based on user feedback. 
 
Recent works explore incorporating LLMs into the active preference learning framework~\cite{muldrew2024active, lin2023decision, piriyakulkij2023active, andukuri2024stargate}. 
\cite{handa2024bayesian} uses Bayesian optimal experimental design~\cite{lindley1956measure, rainforth2024modern} to select the optimal comparative pair of feature vectors and uses LLMs to translate the selected pair into natural language questions, but feature vectors will fail to scale to our setting where placement preferences are combinatorial.
Meanwhile, \cite{li2023eliciting} uses LLMs to generate questions directly, but it relies solely on LLMs to pick effective questions to ask. 
Our approach bridges the two paradigms by proposing candidate questions to ask via LLMs 
and selecting the best questions that maximize information gain. 
Another tangential work is~\cite{ren2023robots}, which uses conformal prediction to quantify uncertainty, but it does not focus on active learning, and it requires a diverse calibration dataset, which is difficult to acquire for our setting.

\textbf{LLMs for Planning.} 
Work can be categorized as interactive task planners \cite{huang2022inner, wang2024mosaic, li2023interactive} that adapt the plan through user interactions, affordance planners \cite{ahn2022i, lin2023text, huang2023voxposer} that generate feasible plans, and code planners \cite{singh2022progprompt, wang2023demo2code, huang2023voxposer, liang2023code, Wu_2023} that generate plans as a sequence of function calls. 
Affordance planning is significant to our work since we must generate geometrically feasible plans to respect environmental constraints.
Prior works like~\cite{valmeekam2023planning} verify its LLM plans with an external tool to solve structured planning domains.
~\cite{lin2023text} verifies the feasibility of its LLM plan, which sequences robot skills, with learned Q-functions. 
Similar to these works, we use a world model to verify geometric feasibility.
However, our work differs from both because the LLM planner incorporates and reflects on the world model's feedback for geometric feasibility to improve the plan over time~\cite{yao2022react, shinn2024reflexion}.

\vspace{-0.5em}
\section{Problem Formulation\label{sec:problem_form}}
\vspace{-1em}
We frame the fridge organization task as an MDP. 
State $s \in S$ is the set of objects and their locations. 
Action $a \in A$ are high-level actions \texttt{pickandplace(obj\_name, xyz\_loc)}.
Given a plan $\xi = \{s_0, a_0, \dots, s_T, a_T\}$, the reward function $\mathcal{R}(\xi, \theta)$ evaluates whether object placements in $\xi$ satisfy the preference $\theta$. We represent these preferences as natural language, and $\theta^*$ is the (latent) ground-truth preference.
We compute the reward $\mathcal{R}(\xi, \theta)$ through an LLM that outputs the percentage of object placements in $\xi$ that satisfy the preference $\theta$.
We assume access to a world model, which computes geometric constraints $\mathcal{C}(\xi) = 0/1$ based on whether object placements cause collisions.

Given an initial condition $s_0$ (e.g., objects in the fridge), a task $\mathcal{T}$ (e.g., a set of objects to place), a preference prior $P(\theta)$, the goal is to find a feasible plan that maximizes the expected reward:
\begin{equation}\label{eq:our_opt_fn}
    \argmax_{\xi} \mathbb{E}_{\theta \sim P(\theta)} \mathcal{R}(\xi, \theta) \quad \mathrm{  s.t. } \quad \mathcal{C}(\xi) = 0.
\end{equation}
To learn the prior $P(\theta)$, we leverage a small set of demonstrations $\mathcal{D}$ from the user similar to prior works~\cite{palan2019learning, biyik2022learning, avaei2023incremental, wang2022feedback}.
Given a plan $\xi_{\mathcal{D}} \in \mathcal{D}$, we model the prior as a Boltzmann distribution $P(\theta | \xi_{\mathcal{D}}) \propto \exp(\mathcal{R}(\xi_{\mathcal{D}}, \theta))$. 
Intuitively, preferences $\theta$ sampled from $P(\theta)$ are consistent with the demonstrations,
i.e. rewards given these preferences are high: $\mathcal{R}(\xi_{\mathcal{D}}, \theta) = 1$.
Hence, given demonstrations $\mathcal{D}$, we construct a preference prior $P(\theta | \mathcal{D}) = \prod_{\xi_{\mathcal{D}} \in \mathcal{D}} P(\theta | \xi_{\mathcal{D}})$. We make an important assumption that this prior has support over the ground truth preference $\theta^*$. \footnote{We can relax this to prior has support over a preference $\hat{\theta}$ that is value equivalent to the ground truth preference $\theta^*$, i.e., the optimal plan $\xi^*$ according to $\hat{\theta}$ is also optimal according to $\theta^*$.}

One challenge is that demonstrations alone may not be sufficient to construct a good prior that helps solve (\ref{eq:our_opt_fn}).
For example, multiple candidate preferences can explain the same demonstrations $\mathcal{D}$, but given a new initial condition unseen in the demonstrations, there may not exist a plan $\xi$ where $\mathcal{R}(\xi, \theta)$ is high for all preferences $\theta \sim P(\theta | \mathcal{D})$.
Thus, we need an approach that can actively collapse uncertainty in $P(\theta)$ until a solution $\xi$ to (\ref{eq:our_opt_fn}), which has sufficiently high rewards given all plausible preferences, has been found. 
Another challenge is that if the LLM is directly used open-loop to generate a plan, it may not satisfy the constraints. 
We address both of these in the next section.

\vspace{-0.5em}
\section{Approach}
\vspace{-1em}
We introduce \apricot{}, a three-stage approach, to tackle the problem of (1) learning user preferences from visual user demonstrations and a minimum number of queries (2) generating a plan that satisfies preferences and respects environmental constraints (3) executing the plan with a real-robot system.

\vspace{-1em}
\subsection{Active Preference Learning\label{sec:apl_app}}
\vspace{-0.5em}
\begin{figure}[h]
    \centering
    \includegraphics[width=\linewidth]{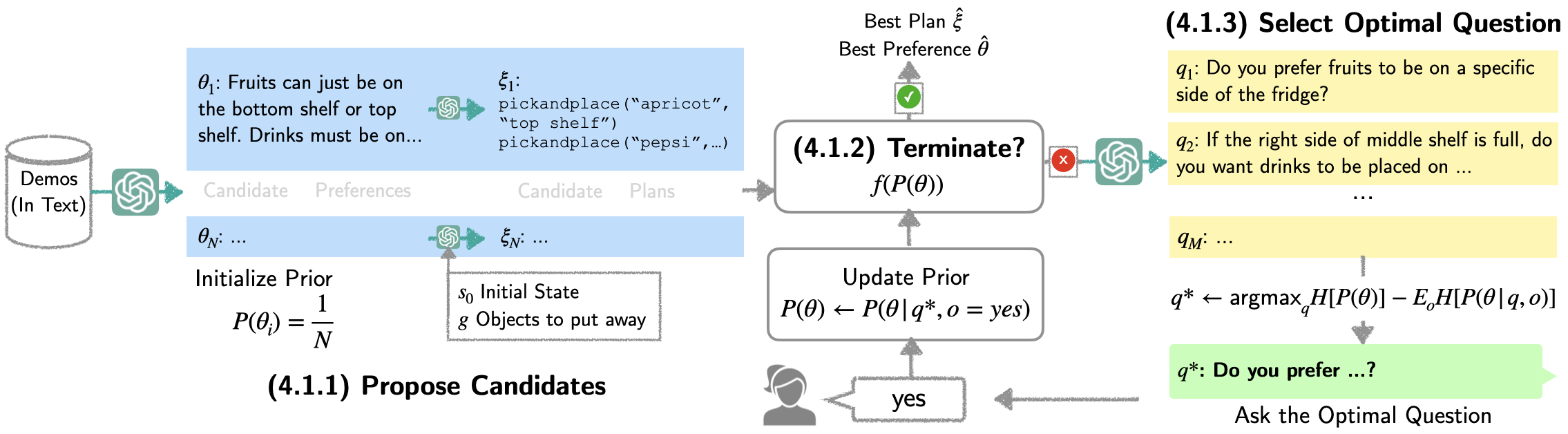}
    \vspace{-5mm}
    \captionsetup{width=\textwidth}
    \caption{\textbf{LLM-Based Bayesian Active Preference Learning Approach.} Given a set of language-based demonstrations, \apricot{} (1) proposes candidate preferences and corresponding candidate plans, (2) determines whether to terminate by evaluating whether the prior over candidate preferences $P(\theta)$ is sufficient, (3) select the optimal question that maximizes information gain before updating its prior based on user answers.\vspace{-1em}}
    \label{fig:pref_learn_approach}
\end{figure}

We present \apricot{}'s LLM-based Bayesian active preference learning module, which takes as input visual user demonstrations and converts them to language-based demonstrations $\mathcal{D}$ via a VLM (Section~\ref{sec:real_robot_app}). \apricot{} outputs a preference that approximates the ground-truth user preference.
Because user demonstrations alone cannot construct a sufficient preference prior (Section~\ref{sec:problem_form}), the goal is to reduce uncertainty in this prior $P(\theta)$ by asking the user a small number of questions.
This module contains three key components:
(1) \textbf{Propose Candidate Preferences}, which generates candidates based on user demonstrations,
(2) \textbf{Determine whether to Query}, which determines whether the module has sufficiently updated the prior,
(3) \textbf{Select and Ask the Optimal Question}, which selects the optimal question to reduce uncertainty and updates the prior given user answers.

\vspace{-0.25em}
\subsubsection{Propose Candidate Preferences}
\vspace{-0.5em}
Directly maintaining the preference prior $P(\theta)$ as a continuous density function is challenging because preferences are represented as free-form natural language.
Instead, \apricot{} maintains a set of $N$ diverse candidate preferences $\theta \sim P(\theta | \mathcal{D})$ given a small set of user demonstrations $\mathcal{D}$.
Concretely, we construct an LLM prompt that keeps sampling until getting preferences $\{\theta_i\}_{i=1}^N$ such that they are all consistent with the demonstrations, i.e. $\mathcal{R}(\xi_{\mathcal{D}}, \theta_i) = 1, \forall \xi_{\mathcal{D}} \in \mathcal{D}, i=1,\dots,N$.
Having sampled $N$ preferences, we set the prior as $P(\theta_i) = \frac{1}{N}$. 
With this initial prior, given a new initial condition $s_0$ unseen in the demonstrations and a task $\mathcal{T}$, there is no guarantee that there exists a plan $\xi$ that satisfies all preferences, i.e., $\mathcal{R}(\xi, \theta_i)=1$. This necessitates actively querying the user. 

\vspace{-0.25em}
\subsubsection{Determine Whether To Query\label{sec:apl_terminate}}
\vspace{-0.5em}
Given the current prior over candidate preferences $P(\theta)$, \apricot{} determines whether it needs to ask more questions to reduce uncertainty or the current prior is sufficient to solve (\ref{eq:our_opt_fn}), i.e., there exists a plan $\xi$ that has sufficiently high rewards given all preferences in $\{\theta_i\}_{i=1}^N$.

To do so, we first construct a library $\Xi$ of candidate plans. Given an initial condition $s_0$ and a task $\mathcal{T}$, for each candidate $\theta_i$, we use the task planner (Section~\ref{sec:planner_app}) to generate a plan $\xi_i$ that maximizes $\mathcal{R}(\xi_i, \theta_i)$. We accumulate $N$ plans in total $\Xi = \{\xi_i\}_{i=1}^N$ corresponding to the $N$ preferences $\{ \theta_i \}_{i=1}^N$.
We also define the \emph{disadvantage function} $\textsc{DisAdv} (\xi_j, \theta_i)$, which quantifies how bad a plan $\xi_j$ is compared to any other plan in the library $\Xi$ for a given preference $\theta_i$:
\begin{equation}\label{eq:disadv}
    \textsc{DisAdv}(\xi_j, \theta_i) = \max_{\xi \in \Xi} \mathcal{R}(\xi, \theta_i) - \mathcal{R}(\xi_j, \theta_i).
\end{equation}
Then, the condition on whether to terminate querying $f(P(\theta))$ checks the following:  does there exists a plan $\xi$ whose expected disadvantage across all candidate preferences is below a threshold $\epsilon$,
\begin{equation}\label{eq:terminate_fn}
    f(P(\theta)) = \exists {\xi \in \Xi} \quad \mathrm{s.t} \quad \sum_{i=1}^N P(\theta_i) \textsc{DisAdv}(\xi, \theta_i) \leq \epsilon.
\end{equation}
Intuitively, \apricot{} will terminate querying the user either (1) when a plan $\xi$ has low disadvantages across all candidate preferences or (2) when the probability of candidate preferences that give a high disadvantage to $\xi$ approaches 0. 
Once the terminating condition $f(P(\theta))$ is true, we get the best plan $\xi_i$ that satisfies (\ref{eq:terminate_fn}) and, by proxy, the corresponding preference $\theta_i$ that generates the plan.
Appendix~\ref{app:terminate_cond_proof} prove that \apricot{}'s performance is bounded given this terminating condition.

\vspace{-0.25em}
\subsubsection{Select and Ask Optimal Query}
\vspace{-0.5em}
To collapse uncertainty in $P(\theta)$, \apricot{} asks the user a question $q$ and receives an answer $o \in \mathcal{O}$ that provides additional information about latent user preferences beyond the demonstrations. 
To efficiently query the user with informative questions, we choose questions $q^*$ that greedily maximizes information gain~\cite{lindley1956measure, rainforth2024modern}: $q^* \gets \arg\max_{q} H[P(\theta)] - \E_{o} H [P(\theta | q, o)]$. 
Unlike the typical Bayesian approach that relies on a fixed corpus of questions, we use an LLM to generate a question set $\mathcal{Q}$.

For \apricot{}, we assume that the robot will only ask yes-or-no questions, so user answers are defined as $o \in \{\text{yes}, \text{no}\}$.
We use an LLM to generate $M$ questions for each preference pair $(\theta_i, \theta_j)$, which we accumulate to construct $\mathcal{Q}$.
Empirically, the LLM often asks about specific aspects that are different between the pair instead of directly asking which preference is better (Fig.~\ref{fig:pref_learning_qual}).

Given a candidate question set $\mathcal{Q}$, we can find the optimal question $q^* \in \mathcal{Q}$ by solving the equation:
\begin{equation}\label{eq:entropy_reduct}
    q^* = \arg\max_{q \in \mathcal{Q}} \quad H[P(\theta)] - \sum_{o\in\{\text{yes}, \text{no}\}} \sum_{i=1}^N P(o|q, \theta_i) H [P(\theta_i | q, o)].
\end{equation}
To calculate the likelihood $P(o | q, \theta_i)$, we use the Bradley-Terry Model~\cite{bradley1952rank}, where the score is the logit of a LLM $P^\text{roleplay}(.|q, \theta_i)$ that pretends to be a user with preference $\theta_i$ answering the question $q$. Thus, $P(o | q, \theta_i) = \sigma (P^\text{roleplay}(o=\mathrm{yes} | q, \theta_i) - P^\text{roleplay}(o=\mathrm{no} | q, \theta_i))$.
The posterior $P(\theta_i | q, o) = \frac{P(o | q, \theta_i)P(\theta_i)}{P(o, q)}$ is the normalized product of the likelihood and prior.
Once the user answers $o$ to the optimal question $q^*$, we update the prior with the posterior $P(\theta | q^*, o)$ and re-determine whether to terminate. 
Hyperparameters, computation cost, and LLM prompts are in Appendix~\ref{app:apl_app_details}.

\vspace{-0.25em}
\subsection{Preference and Geometric Constraint Aware Task Planner\label{sec:planner_app}}
\vspace{-0.5em}
Given an initial condition $s_0$, a task $\mathcal{T}$, and a preference $\theta_i$, \apricot{}'s task planner must output a plan $\xi_i$ that maximizes the reward function $\mathcal{R}(\xi_i, \theta_i)$ and satisfies constraints $\mathcal{C}(\xi_i) = 0$ (Section~\ref{sec:problem_form}). 

\textbf{Semantic Plan.} We first prompt an LLM to output a semantic plan, which is a sequence of high-level semantic actions (e.g., \texttt{pickandplace("apricot", "top shelf")}).


\textbf{Geometric Plan Based on World Model.} Given a semantic plan, the task planner needs to ground it in geometrically feasible XYZ locations. 
We assume access to a world model, which, for the real-robot system, is represented as the 3D point cloud of the fridge and 3D bounding boxes for all objects.
When calculating $\mathcal{C}(\xi_i)$, the world model collision checks an object's XYZ placement location with the point cloud. 
The task planner performs a beam search, where each node is an object's XYZ placement coordinate sampled based on the semantic plan's location. 
It selects the final $\xi_i$ based on the beam with the highest reward $\mathcal{R}(\xi_i, \theta_i)$ and lowest constraint violation $\mathcal{C}(\xi_i)$. 

\textbf{Reflect and Refine Based on Feedback.} To refine the generated plan $\xi_i$, we construct a Reflexion~\cite{shinn2024reflexion} style prompt that takes feedback both from the reward $\mathcal{R}(\xi_i, \theta_i)$ and constraint violations $\mathcal{C}(\xi_i)$. We prompt the task planner to regenerate until it finds a feasible placement location for all objects or it exhausts its maximal refinement steps. See Appendix~\ref{app:tp_app_detail} for details.

\vspace{-0.25em}
\subsection{Execution on a Real Mobile Manipulator\label{sec:real_robot_app}}
\vspace{-0.5em}
We introduce two additional components to implement \apricot{} in a real-robotic system. 

\textbf{Perception.} Given an image, the perception system detects objects and determines semantic locations for each object. We assume a list of all possible grocery items that can appear in a fridge, a static camera pointed at the fridge, and ArUco tags for shelf localization. 
We use an open-vocabulary object-detector Grounding-DINO~\cite{liu2023grounding} that uses the prompt ``grocery item" to draw bounding boxes on all of the objects. After filtering with Non-Maximum Suppression, CLIP similarity score~\cite{radford2021learning} determines the most likely label for a cropped object image. 
The semantic location (e.g. ``top shelf", ``middle shelf") for each object is determined by the pixel location of its bounding box center. 
The perception system is used to (1) parse a visual user demonstration into before and after states for a grounded, language-based demonstration and (2) track the current state of the fridge in real time.

\textbf{Execution Policy.} 
A high-level action \texttt{pickandplace(<obj>, <xyz\_loc>)} is converted into a sequence of low-level skills.
(1) For \texttt{pick(<obj>)} and \texttt{place(<xyz\_loc>)}, we train a RL policy in a simple low-dimensional simulator that knows the robot's dynamic model. See details in Appendix~\ref{app:rr_app_detail}.
(2) For \texttt{move()}, we use a simple path planner.
(3) To ensure the system is responsive to changes in the environment, we regenerate the plan $\xi_i$ after completing each high-level action.




\vspace{-1em}
\section{Experiments}
\vspace{-1em}
\begin{figure}[h]
    \centering
    \includegraphics[width=\linewidth]{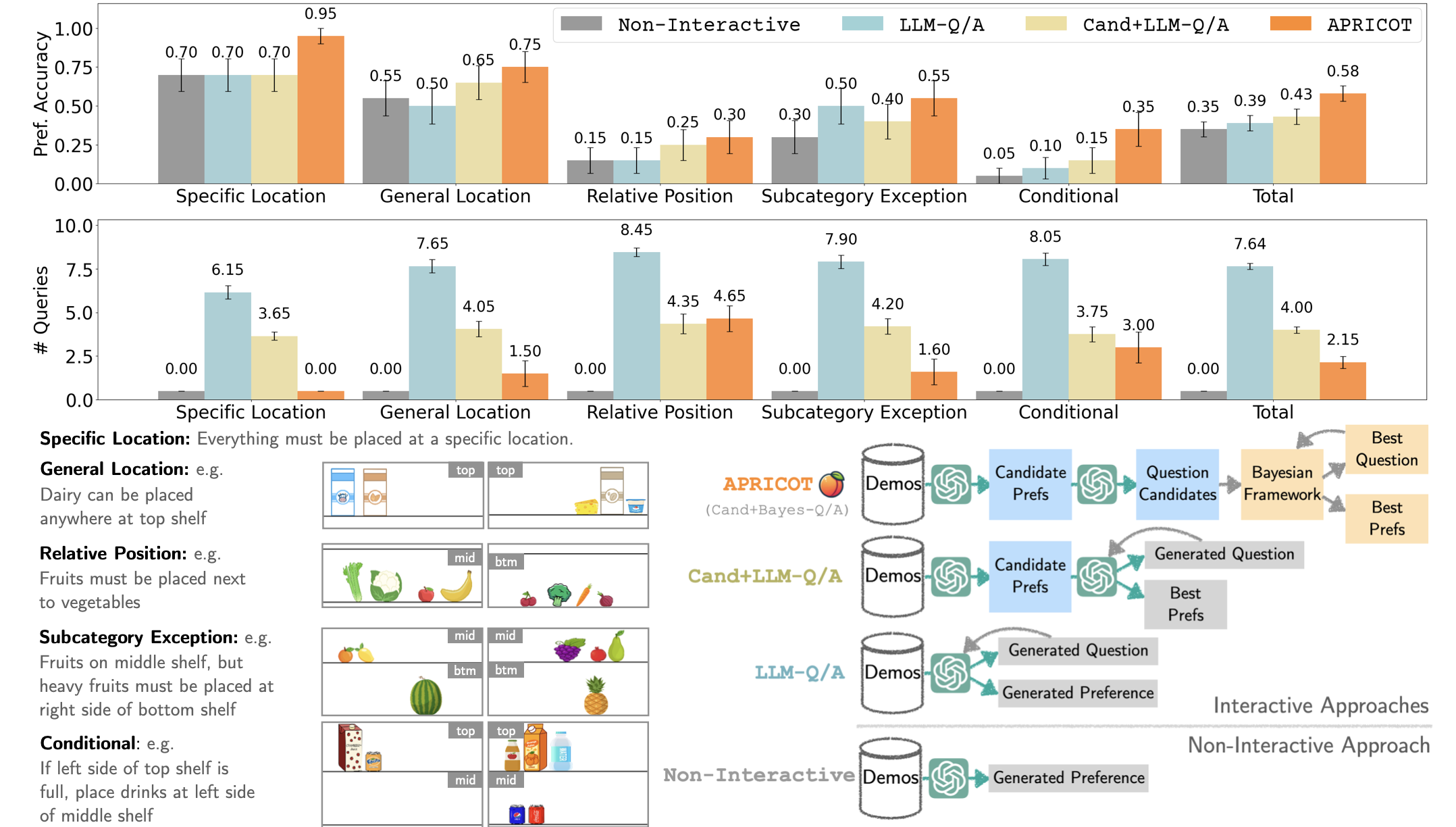}
    \vspace{-5mm}
    \captionsetup{width=\textwidth}
    \caption{\textbf{Active Preference Learning Results on Benchmark Dataset.} \apricot{} achieves the highest preference accuracy $58\%$, which is the percentage of outputted preferences that are equivalent to the ground-truth preference, while asking the user the smallest amount of questions ($2.15$ on average).\vspace{-1em}}
    \label{fig:apl_quant}
\end{figure}

\vspace{-0.25em}
\subsection{Active Preference Learning Experiment}
\vspace{-0.5em}

\textbf{Benchmark Dataset.} To evaluate our active preference learning approach, we programmatically generate a dataset of 100 test cases containing a ground truth preference, 2 demonstrations, and a test scenario. The preferences always refer to object categories (e.g., fruits, vegetables), and they are split into 5 groups, each with 20 cases. Fig.~\ref{fig:apl_quant} shows examples and Appendex~\ref{app:apl_exp_dataset} contains details. 

\textbf{Setup.} We conduct the experiments in a 2D version of the fridge environment. 
The task planner generates a plan $\xi$ as a sequence of XY placement locations, and a 2D world model computes $\mathcal{C}(\xi)$ by collision checking object assets against the environment. 
To simulate a user, we use an LLM that can answer questions based on the ground-truth user preference. 
Details are in Appendix~\ref{app:apl_exp_setup}.

\textbf{Baselines. } 
We compare \apricot{} against two categories of baselines: interactive and non-interactive ones.
Fig.~\ref{fig:apl_quant} shows visualizations of the differences between \apricot{} and these baselines.
Compared to \apricot{}'s Bayesian framework, interactive baselines directly use LLMs to control the active preference learning process.
\llmqa{} relies on one LLM prompt to determine whether to stop asking questions, generate a question, or generate the preference when it terminates. 
\candllmqa{} splits the process into two steps, where it first generates candidate preferences similar to \apricot{}. 
It then uses another LLM to determine whether to terminate, generate a question, or select the best preference from the candidates when it terminates.
Meanwhile, \noninteract{}~\cite{Wu_2023} uses LLMs to directly summarize the demonstrations into a preference. 

\textbf{Metrics. } 
The \textbf{preference accuracy} represents the percentage of preferences determined by an approach that are equivalent to the ground-truth preference.
Practically, based on Section~\ref{sec:problem_form}, a preference $\theta_i$ is equivalent to the ground-truth preference $\theta^*$ if its corresponding plan $\xi_i$ is optimal according to $\theta^*$ such that $\mathcal{R}(\xi_i, \theta_i) = \mathcal{R}(\xi_i, \theta^*) = 1$.
The \textbf{number of queries} quantifies the average number of questions an approach asks the user, so lower is better.

\begin{figure}[h]
    \centering
    \includegraphics[width=0.85\linewidth]{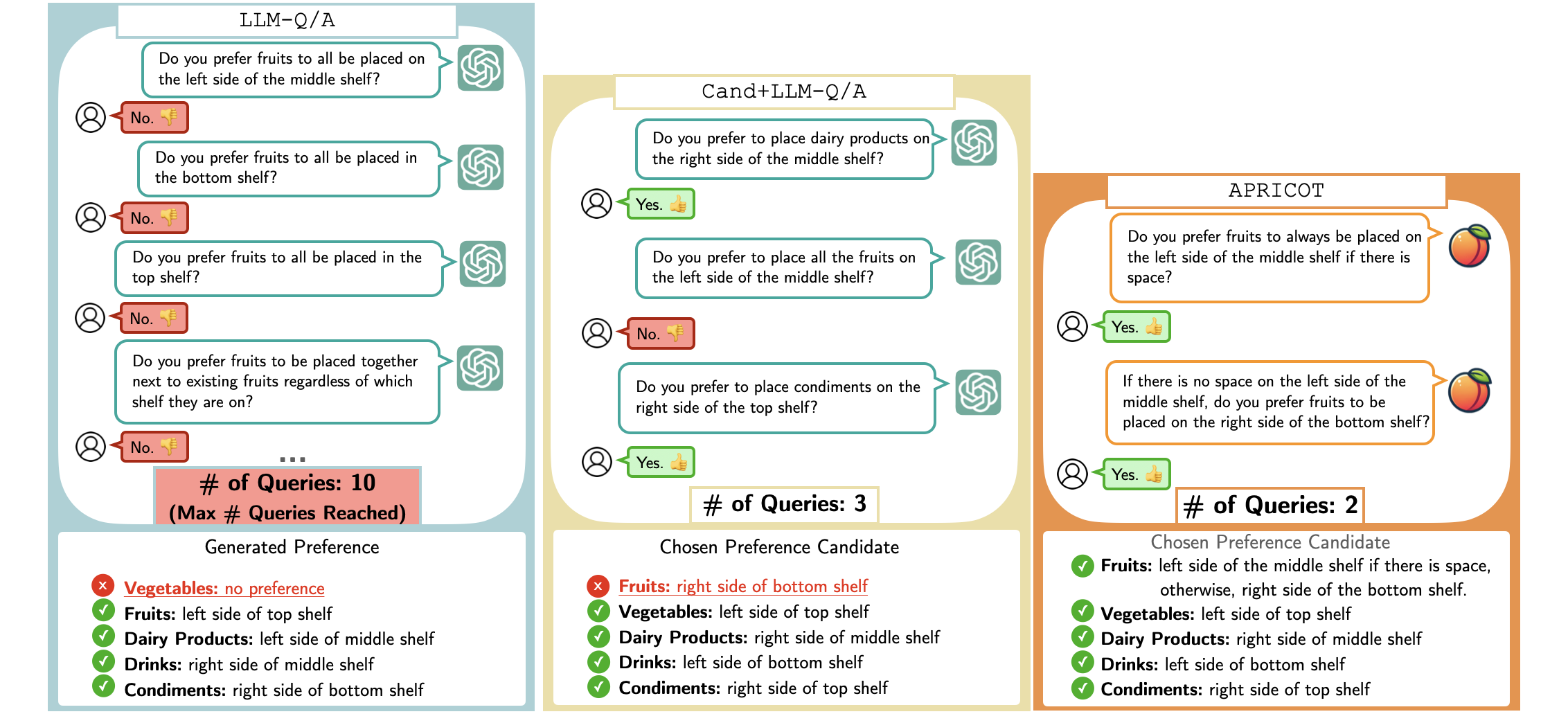}
    \vspace{-2mm}
    \captionsetup{width=\textwidth}
    \caption{\textbf{Example Queries From Each Approach.} \apricot{} correctly infers the ground-truth user preference with the least number of queries because it selects informative questions directly about the category with the most complex requirement. In contrast, \llmqa{} exhausts the number of queries, while \candllmqa{} terminates early but infers the preference incorrectly. Preferences here are simplified as bullet points for readability.\vspace{-1em}}
    \label{fig:pref_learning_qual}
\end{figure}


\textbf{Q1: What is the quality of preferences determined by \apricot{} compared to baselines?}
In Fig.~\ref{fig:apl_quant}, \apricot{} overall achieves the highest preference accuracy of $58.0\%$, while \noninteract{} performs the worst with only $35.0\%$ accuracy, \llmqa{} with $39.0\%$ accuracy, and \candllmqa{} with $43.0\%$ accuracy.
For easy test cases where demonstrations consistently show the same category at the same location, 
\noninteract{} performs equally well compared to the interactive baselines.
However, as the demonstrations have more variability and the preferences become more complex,
\noninteract{}'s performance worsens.
This trend suggests that interactive approaches, which learn additional information by querying the user, are useful in uncovering complex preferences. 
We analyze \apricot{}'s failure cases in depth in Appendix~\ref{app:apl_failure}.

\textbf{Q2: How query-efficient is \apricot{} compared to interactive baselines?}
Fig.~\ref{fig:apl_quant} shows that \apricot{} consistently requires $71.9\%$ less queries compared to \llmqa{} and $46.25\%$ less queries compared to \candllmqa{}.
Fig~\ref{fig:pref_learning_qual} visualizes a qualitative example of each approach's behavior given the same demonstrations.
\llmqa{} struggles to perform well because its LLM prompt is burdened with simultaneously reasoning about the entire active preference learning process and about what preference to generate at the end.
Thus, \llmqa{} simply asks every possible preference for a category, causing it to exhaust the allocated number of queries.
Meanwhile, \candllmqa asks fewer queries compared to \llmqa{}, but its LLM incorrectly interprets the user's answer ``no" and chooses the wrong candidate.
In contrast, 
\apricot{}'s maximum entropy reduction objective (\ref{eq:entropy_reduct}) helps it select the most informative question to reduce uncertainty in the candidate preferences, while its terminating condition (\ref{eq:terminate_fn}) helps it determine when it can terminate without excessive querying. 

\vspace{-0.5em}
\subsection{Real robot experiments}
\vspace{-0.25em}
\begin{figure}[h]
    \centering
    \includegraphics[width=0.85\linewidth]{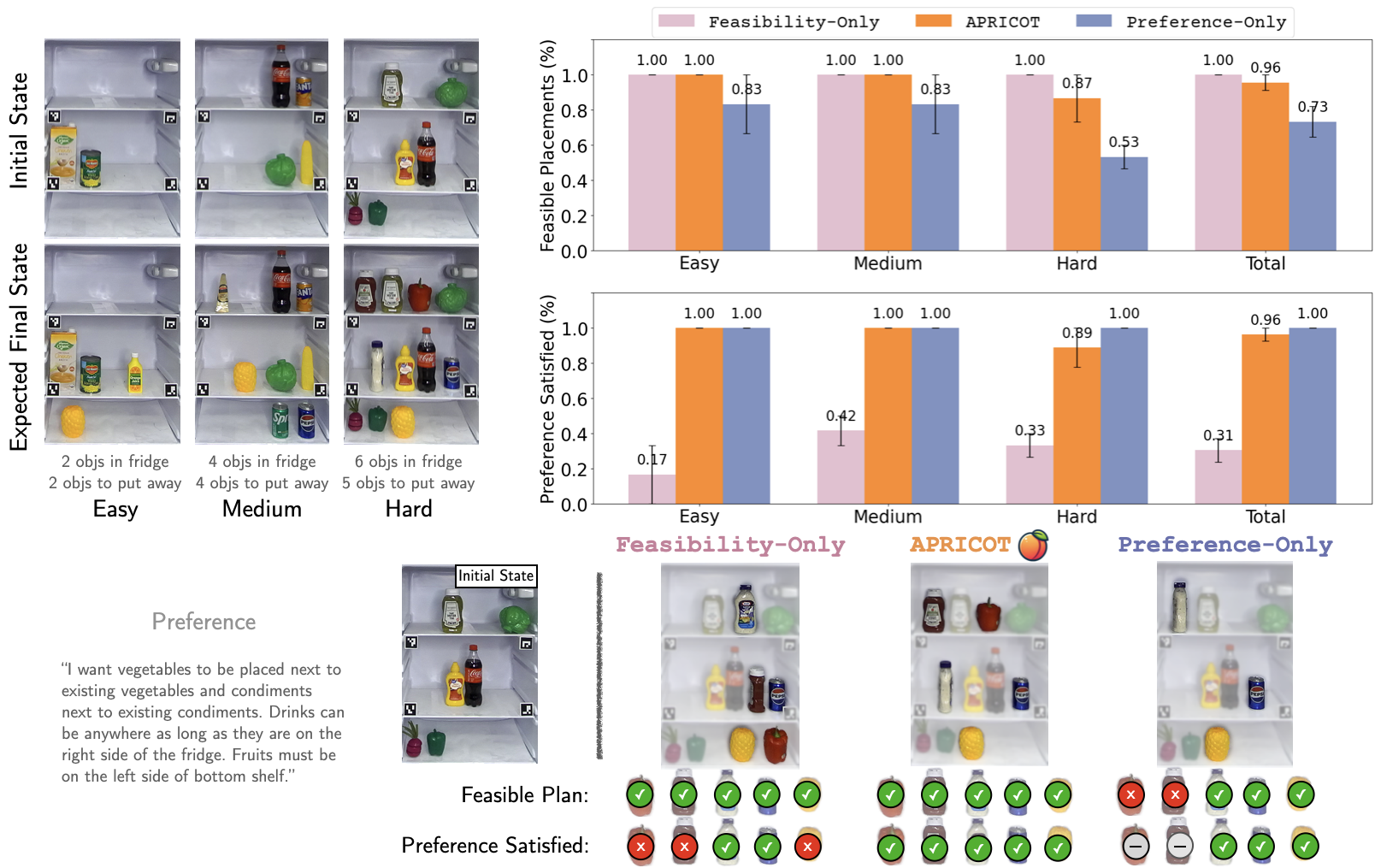}
    \vspace{-1mm}
    \captionsetup{width=\textwidth}
    \caption{\textbf{Task Planner Results on Real-Robot Scenarios.} Evaluated on 9 scenarios with 3 difficulty levels. The qualitative example below \apricot{} generating a plan that satisfies preferences and respect constraints}
    \label{fig:e2e_quant}
\end{figure}

\textbf{Setup. } The task planner is tested on 9 different scenarios, categorized into 3 difficulty levels.
Figure~\ref{fig:e2e_quant} shows examples of these scenarios. 
Each scenario is accompanied by 2 visual demonstrations, which we convert into language-based demonstrations and run \apricot{}'s active preference learning module to output a preference.
We evaluate how well a task planning approach can generate a plan to satisfy this preference while respecting constraints. 
Details are in Appendix~\ref{app:tp_exp_setup}.

\textbf{Baselines. } The \feasonly{} planner disregards user preferences and focuses only on geometric feasibility. It uses beam search to find collision-free placement coordinates for each object. 
The \prefonly{} planner focuses on satisfying user preferences without considering geometric constraints, and its semantic plan is directly converted to XYZ coordinates.

\textbf{Metrics. } The \textbf{percentage of feasible plans} is the percentage of objects that have collision-free placement locations, which checks how well a planner is able to respect environmental constraints. Meanwhile, the \textbf{percentage of preference satisfied} is the percentage of objects that are placed at a location that satisfies the preference outputted by \apricot{}'s active preference learning module.


\textbf{Q3: How does \apricot{} balance satisfying preference and respecting constraints?}
In Fig.~\ref{fig:e2e_quant}, even as the fridge space becomes more constrained, \apricot{} is able to maintain a high percentage of feasible plan ($96.0\%$ for the Hard case) and preference satisfied ($89.0\%$).
Fig.~\ref{fig:e2e_quant} also shows a qualitative example of different planners' behavior. 
For this scenario, \feasonly{} ignores the preference and just focuses on placing everything in the fridge.
\prefonly{} incorrectly assumes that the bottom shelf can fit both the red bell pepper and the pineapple, thereby causing the bell pepper to not receive a feasible placement location.
In contrast, \apricot{} initially makes the same incorrect assumption, but it is able to improve based on the world model's feedback and adjust its plan to place the red bell pepper on the right side of the top shelf.
Appendix~\ref{app:refine} shows detailed traces of \apricot{} planner's improvements as it receives feedback.

\begin{wrapfigure}[18]{r}{0.4\textwidth}
    \vspace{-5mm}
    \centering
    \includegraphics[width=0.4\textwidth]{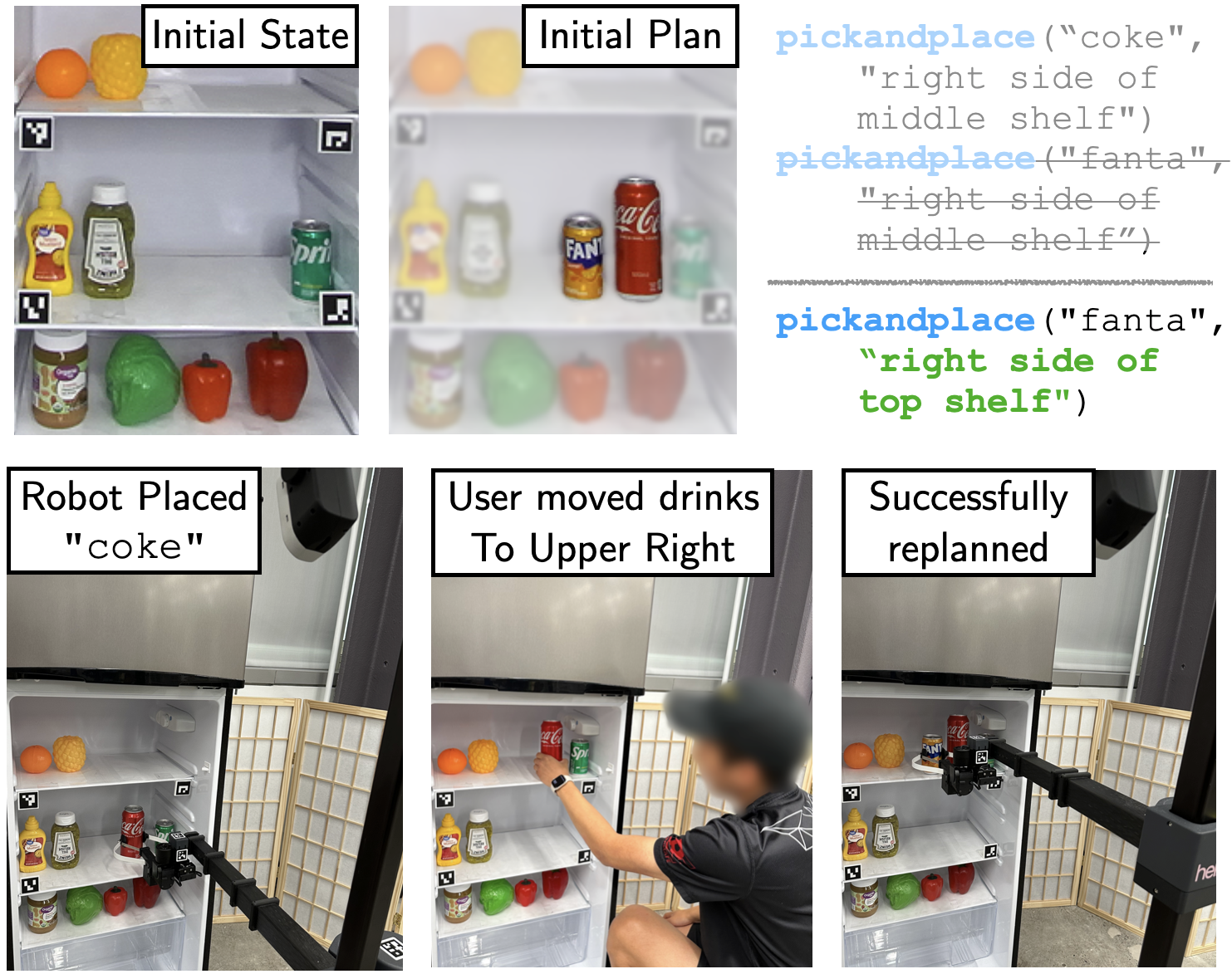}
    \caption{\textbf{\apricot{} Adapts to Changes in Environments.} The user's preference requires drinks to be placed together. After the user disrupts the scene and moves the drinks to the right side of the top shelf, \apricot{}'s closed-loop planner is able to adjust its plan and properly satisfy the user's preference in the newly changed environments.}
    \label{fig:e2e_intervention}
\end{wrapfigure}

\textbf{Q5: How does \apricot{} regenerate plan to account for changes in the environment?} 
Fig.~\ref{fig:e2e_intervention} shows that, because of its closed-loop planning, \apricot{} can adapt its plan to environmental changes and continue satisfying user preferences. See Appendix~\ref{app:replan} for other examples.


\vspace{-1em}
\section{Discussion and Limitation}
\vspace{-0.5em}
We explore the problem of solving a personalized organizational task with environmental constraints.
We present \apricot{}, a three-stage approach that learns the user's preference based on a small set of demonstrations and minimal online querying, generates a robot task plan that accounts for the learned preference and geometric constraints, and executes that plan on a real-robot system. 
Limitations include: (1) Active preference learning assumes one of the candidate preferences is equivalent to the ground-truth preference. Future work will explore dynamically expanding prior based on the question-answering and the initial demonstrations. 
(2) LLM task planners cannot guarantee satisfying hard constraints even after our beam search and reflection steps. Future work will embed our LLM planner as fast heuristics within a complete search-based planner. 

\clearpage
\acknowledgments{This work was
supported in part by the National Science Foundation FRR (\#2327973). Sanjiban Choudhury is supported in part by the Google Faculty Research Award and the OpenAI Superalignment Grant. We thank Marion Lepert and Jimmy Wu for discussing ideas for this project in its early stages. }


\bibliography{bibs/all}  

\begin{thebibliography}{50}
\providecommand{\natexlab}[1]{#1}
\providecommand{\url}[1]{\texttt{#1}}
\expandafter\ifx\csname urlstyle\endcsname\relax
  \providecommand{\doi}[1]{doi: #1}\else
  \providecommand{\doi}{doi: \begingroup \urlstyle{rm}\Url}\fi

\bibitem[Ahn et~al.(2022)Ahn, Brohan, Brown, Chebotar, Cortes, David, Finn, Fu, Gopalakrishnan, Hausman, Herzog, Ho, Hsu, Ibarz, Ichter, Irpan, Jang, Ruano, Jeffrey, Jesmonth, Joshi, Julian, Kalashnikov, Kuang, Lee, Levine, Lu, Luu, Parada, Pastor, Quiambao, Rao, Rettinghouse, Reyes, Sermanet, Sievers, Tan, Toshev, Vanhoucke, Xia, Xiao, Xu, Xu, Yan, and Zeng]{ahn2022i}
M.~Ahn, A.~Brohan, N.~Brown, Y.~Chebotar, O.~Cortes, B.~David, C.~Finn, C.~Fu, K.~Gopalakrishnan, K.~Hausman, A.~Herzog, D.~Ho, J.~Hsu, J.~Ibarz, B.~Ichter, A.~Irpan, E.~Jang, R.~J. Ruano, K.~Jeffrey, S.~Jesmonth, N.~J. Joshi, R.~Julian, D.~Kalashnikov, Y.~Kuang, K.-H. Lee, S.~Levine, Y.~Lu, L.~Luu, C.~Parada, P.~Pastor, J.~Quiambao, K.~Rao, J.~Rettinghouse, D.~Reyes, P.~Sermanet, N.~Sievers, C.~Tan, A.~Toshev, V.~Vanhoucke, F.~Xia, T.~Xiao, P.~Xu, S.~Xu, M.~Yan, and A.~Zeng.
\newblock Do as i can, not as i say: Grounding language in robotic affordances, 2022.

\bibitem[Liang et~al.(2023)Liang, Huang, Xia, Xu, Hausman, Ichter, Florence, and Zeng]{liang2023code}
J.~Liang, W.~Huang, F.~Xia, P.~Xu, K.~Hausman, B.~Ichter, P.~Florence, and A.~Zeng.
\newblock Code as policies: Language model programs for embodied control, 2023.

\bibitem[Li et~al.(2023)Li, Wu, Abbeel, and Malik]{li2023interactive}
B.~Li, P.~Wu, P.~Abbeel, and J.~Malik.
\newblock Interactive task planning with language models, 2023.

\bibitem[Wang et~al.(2023)Wang, Gonzalez-Pumariega, Sharma, and Choudhury]{wang2023demo2code}
H.~Wang, G.~Gonzalez-Pumariega, Y.~Sharma, and S.~Choudhury.
\newblock Demo2code: From summarizing demonstrations to synthesizing code via extended chain-of-thought, 2023.

\bibitem[Wu et~al.(2023)Wu, Antonova, Kan, Lepert, Zeng, Song, Bohg, Rusinkiewicz, and Funkhouser]{Wu_2023}
J.~Wu, R.~Antonova, A.~Kan, M.~Lepert, A.~Zeng, S.~Song, J.~Bohg, S.~Rusinkiewicz, and T.~Funkhouser.
\newblock Tidybot: personalized robot assistance with large language models.
\newblock \emph{Autonomous Robots}, 47\penalty0 (8):\penalty0 1087–1102, Nov. 2023.
\newblock ISSN 1573-7527.
\newblock \doi{10.1007/s10514-023-10139-z}.
\newblock URL \url{http://dx.doi.org/10.1007/s10514-023-10139-z}.

\bibitem[Gao et~al.(2024)Gao, Taymanov, Salinas, Mineiro, and Misra]{gao2024aligning}
G.~Gao, A.~Taymanov, E.~Salinas, P.~Mineiro, and D.~Misra.
\newblock Aligning llm agents by learning latent preference from user edits, 2024.

\bibitem[Han et~al.(2024)Han, McInroe, Jelley, Albrecht, Bell, and Storkey]{han2024llmpersonalize}
D.~Han, T.~McInroe, A.~Jelley, S.~V. Albrecht, P.~Bell, and A.~Storkey.
\newblock Llm-personalize: Aligning llm planners with human preferences via reinforced self-training for housekeeping robots, 2024.

\bibitem[Golovin et~al.(2013)Golovin, Krause, and Ray]{golovin2013nearoptimal}
D.~Golovin, A.~Krause, and D.~Ray.
\newblock Near-optimal bayesian active learning with noisy observations, 2013.

\bibitem[Akrour et~al.(2012)Akrour, Schoenaue, and Sebag]{Akrour2012april}
R.~Akrour, M.~Schoenaue, and M.~Sebag.
\newblock April: Active preference learning-based reinforcement learning, 2012.

\bibitem[Sadigh et~al.(2017)Sadigh, Dragan, Sastry, and Seshia]{sadigh2017active}
D.~Sadigh, A.~D. Dragan, S.~Sastry, and S.~A. Seshia.
\newblock \emph{Active preference-based learning of reward functions}.
\newblock 2017.

\bibitem[Biyik and Sadigh(2018)]{pmlr-v87-biyik18a}
E.~Biyik and D.~Sadigh.
\newblock Batch active preference-based learning of reward functions.
\newblock In A.~Billard, A.~Dragan, J.~Peters, and J.~Morimoto, editors, \emph{Proceedings of The 2nd Conference on Robot Learning}, volume~87 of \emph{Proceedings of Machine Learning Research}, pages 519--528. PMLR, 29--31 Oct 2018.

\bibitem[Basu et~al.(2019)Basu, B{\i}y{\i}k, He, Singhal, and Sadigh]{basu2019active}
C.~Basu, E.~B{\i}y{\i}k, Z.~He, M.~Singhal, and D.~Sadigh.
\newblock Active learning of reward dynamics from hierarchical queries.
\newblock In \emph{2019 IEEE/RSJ International Conference on Intelligent Robots and Systems (IROS)}, pages 120--127. IEEE, 2019.

\bibitem[Wilson et~al.(2012)Wilson, Fern, and Tadepalli]{NIPS2012_16c222aa}
A.~Wilson, A.~Fern, and P.~Tadepalli.
\newblock A bayesian approach for policy learning from trajectory preference queries.
\newblock In F.~Pereira, C.~Burges, L.~Bottou, and K.~Weinberger, editors, \emph{Advances in Neural Information Processing Systems}, volume~25. Curran Associates, Inc., 2012.
\newblock URL \url{https://proceedings.neurips.cc/paper_files/paper/2012/file/16c222aa19898e5058938167c8ab6c57-Paper.pdf}.

\bibitem[Katz et~al.(2019)Katz, Le~Bihan, and Kochenderfer]{katz2019learning}
S.~M. Katz, A.-C. Le~Bihan, and M.~J. Kochenderfer.
\newblock Learning an urban air mobility encounter model from expert preferences.
\newblock In \emph{2019 IEEE/AIAA 38th Digital Avionics Systems Conference (DASC)}, pages 1--8. IEEE, 2019.

\bibitem[Wilde et~al.(2020{\natexlab{a}})Wilde, Blidaru, Smith, and Kulić]{wilde2020userspec}
N.~Wilde, A.~Blidaru, S.~L. Smith, and D.~Kulić.
\newblock Improving user specifications for robot behavior through active preference learning: Framework and evaluation.
\newblock \emph{The International Journal of Robotics Research}, 39\penalty0 (6):\penalty0 651--667, 2020{\natexlab{a}}.
\newblock \doi{10.1177/0278364920910802}.
\newblock URL \url{https://doi.org/10.1177/0278364920910802}.

\bibitem[Wilde et~al.(2020{\natexlab{b}})Wilde, Kulic, and Smith]{wilde2020active}
N.~Wilde, D.~Kulic, and S.~L. Smith.
\newblock Active preference learning using maximum regret, 2020{\natexlab{b}}.

\bibitem[Basu et~al.(2018)Basu, Singhal, and Dragan]{basu2018learning}
C.~Basu, M.~Singhal, and A.~D. Dragan.
\newblock Learning from richer human guidance: Augmenting comparison-based learning with feature queries.
\newblock In \emph{Proceedings of the 2018 ACM/IEEE International Conference on Human-Robot Interaction}, pages 132--140, 2018.

\bibitem[B{\i}y{\i}k et~al.(2019)B{\i}y{\i}k, Palan, Landolfi, Losey, and Sadigh]{biyik2019asking}
E.~B{\i}y{\i}k, M.~Palan, N.~C. Landolfi, D.~P. Losey, and D.~Sadigh.
\newblock Asking easy questions: A user-friendly approach to active reward learning.
\newblock \emph{arXiv preprint arXiv:1910.04365}, 2019.

\bibitem[Holladay et~al.(2016)Holladay, Javdani, Dragan, and Srinivasa]{holladay2016active}
R.~Holladay, S.~Javdani, A.~Dragan, and S.~Srinivasa.
\newblock Active comparison based learning incorporating user uncertainty and noise.
\newblock In \emph{RSS Workshop on Model Learning for Human-Robot Communication}, 2016.

\bibitem[Palan et~al.(2019)Palan, Landolfi, Shevchuk, and Sadigh]{palan2019learning}
M.~Palan, N.~C. Landolfi, G.~Shevchuk, and D.~Sadigh.
\newblock Learning reward functions by integrating human demonstrations and preferences.
\newblock \emph{arXiv preprint arXiv:1906.08928}, 2019.

\bibitem[B{\i}y{\i}k et~al.(2022)B{\i}y{\i}k, Losey, Palan, Landolfi, Shevchuk, and Sadigh]{biyik2022learning}
E.~B{\i}y{\i}k, D.~P. Losey, M.~Palan, N.~C. Landolfi, G.~Shevchuk, and D.~Sadigh.
\newblock Learning reward functions from diverse sources of human feedback: Optimally integrating demonstrations and preferences.
\newblock \emph{The International Journal of Robotics Research}, 41\penalty0 (1):\penalty0 45--67, 2022.

\bibitem[Avaei et~al.(2023)Avaei, van~der Spaa, Peternel, and Kober]{avaei2023incremental}
A.~Avaei, L.~van~der Spaa, L.~Peternel, and J.~Kober.
\newblock An incremental inverse reinforcement learning approach for motion planning with separated path and velocity preferences.
\newblock \emph{Robotics}, 12\penalty0 (2):\penalty0 61, 2023.

\bibitem[Wang et~al.(2022)Wang, Wang, and Min]{wang2022feedback}
R.~Wang, W.~Wang, and B.-C. Min.
\newblock Feedback-efficient active preference learning for socially aware robot navigation.
\newblock In \emph{2022 IEEE/RSJ International Conference on Intelligent Robots and Systems (IROS)}, pages 11336--11343. IEEE, 2022.

\bibitem[Muldrew et~al.(2024)Muldrew, Hayes, Zhang, and Barber]{muldrew2024active}
W.~Muldrew, P.~Hayes, M.~Zhang, and D.~Barber.
\newblock Active preference learning for large language models.
\newblock \emph{arXiv preprint arXiv:2402.08114}, 2024.

\bibitem[Lin et~al.(2023)Lin, Tomlin, Andreas, and Eisner]{lin2023decision}
J.~Lin, N.~Tomlin, J.~Andreas, and J.~Eisner.
\newblock Decision-oriented dialogue for human-ai collaboration.
\newblock \emph{arXiv preprint arXiv:2305.20076}, 2023.

\bibitem[Piriyakulkij et~al.(2023)Piriyakulkij, Kuleshov, and Ellis]{piriyakulkij2023active}
T.~Piriyakulkij, V.~Kuleshov, and K.~Ellis.
\newblock Active preference inference using language models and probabilistic reasoning, 2023.

\bibitem[Andukuri et~al.(2024)Andukuri, Fränken, Gerstenberg, and Goodman]{andukuri2024stargate}
C.~Andukuri, J.-P. Fränken, T.~Gerstenberg, and N.~D. Goodman.
\newblock Star-gate: Teaching language models to ask clarifying questions, 2024.

\bibitem[Handa et~al.(2024)Handa, Gal, Pavlick, Goodman, Andreas, Tamkin, and Li]{handa2024bayesian}
K.~Handa, Y.~Gal, E.~Pavlick, N.~Goodman, J.~Andreas, A.~Tamkin, and B.~Z. Li.
\newblock Bayesian preference elicitation with language models, 2024.

\bibitem[Lindley(1956)]{lindley1956measure}
D.~V. Lindley.
\newblock On a measure of the information provided by an experiment.
\newblock \emph{The Annals of Mathematical Statistics}, 27\penalty0 (4):\penalty0 986--1005, 1956.

\bibitem[Rainforth et~al.(2024)Rainforth, Foster, Ivanova, and Bickford~Smith]{rainforth2024modern}
T.~Rainforth, A.~Foster, D.~R. Ivanova, and F.~Bickford~Smith.
\newblock Modern bayesian experimental design.
\newblock \emph{Statistical Science}, 39\penalty0 (1):\penalty0 100--114, 2024.

\bibitem[Li et~al.(2023)Li, Tamkin, Goodman, and Andreas]{li2023eliciting}
B.~Z. Li, A.~Tamkin, N.~Goodman, and J.~Andreas.
\newblock Eliciting human preferences with language models, 2023.

\bibitem[Ren et~al.(2023)Ren, Dixit, Bodrova, Singh, Tu, Brown, Xu, Takayama, Xia, Varley, et~al.]{ren2023robots}
A.~Z. Ren, A.~Dixit, A.~Bodrova, S.~Singh, S.~Tu, N.~Brown, P.~Xu, L.~Takayama, F.~Xia, J.~Varley, et~al.
\newblock Robots that ask for help: Uncertainty alignment for large language model planners.
\newblock \emph{arXiv preprint arXiv:2307.01928}, 2023.

\bibitem[Huang et~al.(2022)Huang, Xia, Xiao, Chan, Liang, Florence, Zeng, Tompson, Mordatch, Chebotar, Sermanet, Brown, Jackson, Luu, Levine, Hausman, and Ichter]{huang2022inner}
W.~Huang, F.~Xia, T.~Xiao, H.~Chan, J.~Liang, P.~Florence, A.~Zeng, J.~Tompson, I.~Mordatch, Y.~Chebotar, P.~Sermanet, N.~Brown, T.~Jackson, L.~Luu, S.~Levine, K.~Hausman, and B.~Ichter.
\newblock Inner monologue: Embodied reasoning through planning with language models, 2022.

\bibitem[Wang et~al.(2024)Wang, Kedia, Ren, Abdullah, Bhardwaj, Chao, Chen, Chin, Dan, Fan, Gonzalez-Pumariega, Kompella, Pace, Sharma, Sun, Sunkara, and Choudhury]{wang2024mosaic}
H.~Wang, K.~Kedia, J.~Ren, R.~Abdullah, A.~Bhardwaj, A.~Chao, K.~Y. Chen, N.~Chin, P.~Dan, X.~Fan, G.~Gonzalez-Pumariega, A.~Kompella, M.~A. Pace, Y.~Sharma, X.~Sun, N.~Sunkara, and S.~Choudhury.
\newblock Mosaic: A modular system for assistive and interactive cooking, 2024.

\bibitem[Lin et~al.(2023)Lin, Agia, Migimatsu, Pavone, and Bohg]{lin2023text}
K.~Lin, C.~Agia, T.~Migimatsu, M.~Pavone, and J.~Bohg.
\newblock Text2motion: from natural language instructions to feasible plans.
\newblock \emph{Autonomous Robots}, 47\penalty0 (8):\penalty0 1345–1365, Nov. 2023.
\newblock ISSN 1573-7527.
\newblock \doi{10.1007/s10514-023-10131-7}.
\newblock URL \url{http://dx.doi.org/10.1007/s10514-023-10131-7}.

\bibitem[Huang et~al.(2023)Huang, Wang, Zhang, Li, Wu, and Fei-Fei]{huang2023voxposer}
W.~Huang, C.~Wang, R.~Zhang, Y.~Li, J.~Wu, and L.~Fei-Fei.
\newblock Voxposer: Composable 3d value maps for robotic manipulation with language models, 2023.

\bibitem[Singh et~al.(2022)Singh, Blukis, Mousavian, Goyal, Xu, Tremblay, Fox, Thomason, and Garg]{singh2022progprompt}
I.~Singh, V.~Blukis, A.~Mousavian, A.~Goyal, D.~Xu, J.~Tremblay, D.~Fox, J.~Thomason, and A.~Garg.
\newblock Progprompt: Generating situated robot task plans using large language models, 2022.

\bibitem[Valmeekam et~al.(2023)Valmeekam, Marquez, Sreedharan, and Kambhampati]{valmeekam2023planning}
K.~Valmeekam, M.~Marquez, S.~Sreedharan, and S.~Kambhampati.
\newblock On the planning abilities of large language models-a critical investigation.
\newblock \emph{Advances in Neural Information Processing Systems}, 36:\penalty0 75993--76005, 2023.

\bibitem[Yao et~al.(2022)Yao, Zhao, Yu, Du, Shafran, Narasimhan, and Cao]{yao2022react}
S.~Yao, J.~Zhao, D.~Yu, N.~Du, I.~Shafran, K.~Narasimhan, and Y.~Cao.
\newblock React: Synergizing reasoning and acting in language models.
\newblock \emph{arXiv preprint arXiv:2210.03629}, 2022.

\bibitem[Shinn et~al.(2024)Shinn, Cassano, Gopinath, Narasimhan, and Yao]{shinn2024reflexion}
N.~Shinn, F.~Cassano, A.~Gopinath, K.~Narasimhan, and S.~Yao.
\newblock Reflexion: Language agents with verbal reinforcement learning.
\newblock \emph{Advances in Neural Information Processing Systems}, 36, 2024.

\bibitem[Bradley and Terry(1952)]{bradley1952rank}
R.~A. Bradley and M.~E. Terry.
\newblock Rank analysis of incomplete block designs: I. the method of paired comparisons.
\newblock \emph{Biometrika}, 39\penalty0 (3/4):\penalty0 324--345, 1952.

\bibitem[Liu et~al.(2023)Liu, Zeng, Ren, Li, Zhang, Yang, Li, Yang, Su, Zhu, and Zhang]{liu2023grounding}
S.~Liu, Z.~Zeng, T.~Ren, F.~Li, H.~Zhang, J.~Yang, C.~Li, J.~Yang, H.~Su, J.~Zhu, and L.~Zhang.
\newblock Grounding dino: Marrying dino with grounded pre-training for open-set object detection, 2023.

\bibitem[Radford et~al.(2021)Radford, Kim, Hallacy, Ramesh, Goh, Agarwal, Sastry, Askell, Mishkin, Clark, Krueger, and Sutskever]{radford2021learning}
A.~Radford, J.~W. Kim, C.~Hallacy, A.~Ramesh, G.~Goh, S.~Agarwal, G.~Sastry, A.~Askell, P.~Mishkin, J.~Clark, G.~Krueger, and I.~Sutskever.
\newblock Learning transferable visual models from natural language supervision, 2021.

\bibitem[Schulman et~al.(2017)Schulman, Wolski, Dhariwal, Radford, and Klimov]{schulman2017proximal}
J.~Schulman, F.~Wolski, P.~Dhariwal, A.~Radford, and O.~Klimov.
\newblock Proximal policy optimization algorithms.
\newblock \emph{arXiv preprint arXiv:1707.06347}, 2017.

\bibitem[Dubey et~al.(2024)Dubey, Jauhri, Pandey, Kadian, Al-Dahle, Letman, Mathur, Schelten, Yang, Fan, Goyal, Hartshorn, Yang, Mitra, Sravankumar, Korenev, Hinsvark, Rao, Zhang, Rodriguez, Gregerson, Spataru, Roziere, Biron, Tang, Chern, Caucheteux, Nayak, Bi, Marra, McConnell, Keller, Touret, Wu, Wong, Ferrer, Nikolaidis, Allonsius, Song, Pintz, Livshits, Esiobu, Choudhary, Mahajan, Garcia-Olano, Perino, Hupkes, Lakomkin, AlBadawy, Lobanova, Dinan, Smith, Radenovic, Zhang, Synnaeve, Lee, Anderson, Nail, Mialon, Pang, Cucurell, Nguyen, Korevaar, Xu, Touvron, Zarov, Ibarra, Kloumann, Misra, Evtimov, Copet, Lee, Geffert, Vranes, Park, Mahadeokar, Shah, van~der Linde, Billock, Hong, Lee, Fu, Chi, Huang, Liu, Wang, Yu, Bitton, Spisak, Park, Rocca, Johnstun, Saxe, Jia, Alwala, Upasani, Plawiak, Li, Heafield, Stone, El-Arini, Iyer, Malik, Chiu, Bhalla, Rantala-Yeary, van~der Maaten, Chen, Tan, Jenkins, Martin, Madaan, Malo, Blecher, Landzaat, de~Oliveira, Muzzi, Pasupuleti, Singh, Paluri, Kardas, Oldham, Rita,
  Pavlova, Kambadur, Lewis, Si, Singh, Hassan, Goyal, Torabi, Bashlykov, Bogoychev, Chatterji, Duchenne, Çelebi, Alrassy, Zhang, Li, Vasic, Weng, Bhargava, Dubal, Krishnan, Koura, Xu, He, Dong, Srinivasan, Ganapathy, Calderer, Cabral, Stojnic, Raileanu, Girdhar, Patel, Sauvestre, Polidoro, Sumbaly, Taylor, Silva, Hou, Wang, Hosseini, Chennabasappa, Singh, Bell, Kim, Edunov, Nie, Narang, Raparthy, Shen, Wan, Bhosale, Zhang, Vandenhende, Batra, Whitman, Sootla, Collot, Gururangan, Borodinsky, Herman, Fowler, Sheasha, Georgiou, Scialom, Speckbacher, Mihaylov, Xiao, Karn, Goswami, Gupta, Ramanathan, Kerkez, Gonguet, Do, Vogeti, Petrovic, Chu, Xiong, Fu, Meers, Martinet, Wang, Tan, Xie, Jia, Wang, Goldschlag, Gaur, Babaei, Wen, Song, Zhang, Li, Mao, Coudert, Yan, Chen, Papakipos, Singh, Grattafiori, Jain, Kelsey, Shajnfeld, Gangidi, Victoria, Goldstand, Menon, Sharma, Boesenberg, Vaughan, Baevski, Feinstein, Kallet, Sangani, Yunus, Lupu, Alvarado, Caples, Gu, Ho, Poulton, Ryan, Ramchandani, Franco, Saraf,
  Chowdhury, Gabriel, Bharambe, Eisenman, Yazdan, James, Maurer, Leonhardi, Huang, Loyd, Paola, Paranjape, Liu, Wu, Ni, Hancock, Wasti, Spence, Stojkovic, Gamido, Montalvo, Parker, Burton, Mejia, Wang, Kim, Zhou, Hu, Chu, Cai, Tindal, Feichtenhofer, Civin, Beaty, Kreymer, Li, Wyatt, Adkins, Xu, Testuggine, David, Parikh, Liskovich, Foss, Wang, Le, Holland, Dowling, Jamil, Montgomery, Presani, Hahn, Wood, Brinkman, Arcaute, Dunbar, Smothers, Sun, Kreuk, Tian, Ozgenel, Caggioni, Guzmán, Kanayet, Seide, Florez, Schwarz, Badeer, Swee, Halpern, Thattai, Herman, Sizov, Guangyi, Zhang, Lakshminarayanan, Shojanazeri, Zou, Wang, Zha, Habeeb, Rudolph, Suk, Aspegren, Goldman, Molybog, Tufanov, Veliche, Gat, Weissman, Geboski, Kohli, Asher, Gaya, Marcus, Tang, Chan, Zhen, Reizenstein, Teboul, Zhong, Jin, Yang, Cummings, Carvill, Shepard, McPhie, Torres, Ginsburg, Wang, Wu, U, Saxena, Prasad, Khandelwal, Zand, Matosich, Veeraraghavan, Michelena, Li, Huang, Chawla, Lakhotia, Huang, Chen, Garg, A, Silva, Bell, Zhang, Guo,
  Yu, Moshkovich, Wehrstedt, Khabsa, Avalani, Bhatt, Tsimpoukelli, Mankus, Hasson, Lennie, Reso, Groshev, Naumov, Lathi, Keneally, Seltzer, Valko, Restrepo, Patel, Vyatskov, Samvelyan, Clark, Macey, Wang, Hermoso, Metanat, Rastegari, Bansal, Santhanam, Parks, White, Bawa, Singhal, Egebo, Usunier, Laptev, Dong, Zhang, Cheng, Chernoguz, Hart, Salpekar, Kalinli, Kent, Parekh, Saab, Balaji, Rittner, Bontrager, Roux, Dollar, Zvyagina, Ratanchandani, Yuvraj, Liang, Alao, Rodriguez, Ayub, Murthy, Nayani, Mitra, Li, Hogan, Battey, Wang, Maheswari, Howes, Rinott, Bondu, Datta, Chugh, Hunt, Dhillon, Sidorov, Pan, Verma, Yamamoto, Ramaswamy, Lindsay, Lindsay, Feng, Lin, Zha, Shankar, Zhang, Zhang, Wang, Agarwal, Sajuyigbe, Chintala, Max, Chen, Kehoe, Satterfield, Govindaprasad, Gupta, Cho, Virk, Subramanian, Choudhury, Goldman, Remez, Glaser, Best, Kohler, Robinson, Li, Zhang, Matthews, Chou, Shaked, Vontimitta, Ajayi, Montanez, Mohan, Kumar, Mangla, Ionescu, Poenaru, Mihailescu, Ivanov, Li, Wang, Jiang, Bouaziz,
  Constable, Tang, Wang, Wu, Wang, Xia, Wu, Gao, Chen, Hu, Jia, Qi, Li, Zhang, Zhang, Adi, Nam, Yu, Wang, Hao, Qian, He, Rait, DeVito, Rosnbrick, Wen, Yang, and Zhao]{dubey2024llama3herdmodels}
A.~Dubey, A.~Jauhri, A.~Pandey, A.~Kadian, A.~Al-Dahle, A.~Letman, A.~Mathur, A.~Schelten, A.~Yang, A.~Fan, A.~Goyal, A.~Hartshorn, A.~Yang, A.~Mitra, A.~Sravankumar, A.~Korenev, A.~Hinsvark, A.~Rao, A.~Zhang, A.~Rodriguez, A.~Gregerson, A.~Spataru, B.~Roziere, B.~Biron, B.~Tang, B.~Chern, C.~Caucheteux, C.~Nayak, C.~Bi, C.~Marra, C.~McConnell, C.~Keller, C.~Touret, C.~Wu, C.~Wong, C.~C. Ferrer, C.~Nikolaidis, D.~Allonsius, D.~Song, D.~Pintz, D.~Livshits, D.~Esiobu, D.~Choudhary, D.~Mahajan, D.~Garcia-Olano, D.~Perino, D.~Hupkes, E.~Lakomkin, E.~AlBadawy, E.~Lobanova, E.~Dinan, E.~M. Smith, F.~Radenovic, F.~Zhang, G.~Synnaeve, G.~Lee, G.~L. Anderson, G.~Nail, G.~Mialon, G.~Pang, G.~Cucurell, H.~Nguyen, H.~Korevaar, H.~Xu, H.~Touvron, I.~Zarov, I.~A. Ibarra, I.~Kloumann, I.~Misra, I.~Evtimov, J.~Copet, J.~Lee, J.~Geffert, J.~Vranes, J.~Park, J.~Mahadeokar, J.~Shah, J.~van~der Linde, J.~Billock, J.~Hong, J.~Lee, J.~Fu, J.~Chi, J.~Huang, J.~Liu, J.~Wang, J.~Yu, J.~Bitton, J.~Spisak, J.~Park, J.~Rocca,
  J.~Johnstun, J.~Saxe, J.~Jia, K.~V. Alwala, K.~Upasani, K.~Plawiak, K.~Li, K.~Heafield, K.~Stone, K.~El-Arini, K.~Iyer, K.~Malik, K.~Chiu, K.~Bhalla, L.~Rantala-Yeary, L.~van~der Maaten, L.~Chen, L.~Tan, L.~Jenkins, L.~Martin, L.~Madaan, L.~Malo, L.~Blecher, L.~Landzaat, L.~de~Oliveira, M.~Muzzi, M.~Pasupuleti, M.~Singh, M.~Paluri, M.~Kardas, M.~Oldham, M.~Rita, M.~Pavlova, M.~Kambadur, M.~Lewis, M.~Si, M.~K. Singh, M.~Hassan, N.~Goyal, N.~Torabi, N.~Bashlykov, N.~Bogoychev, N.~Chatterji, O.~Duchenne, O.~Çelebi, P.~Alrassy, P.~Zhang, P.~Li, P.~Vasic, P.~Weng, P.~Bhargava, P.~Dubal, P.~Krishnan, P.~S. Koura, P.~Xu, Q.~He, Q.~Dong, R.~Srinivasan, R.~Ganapathy, R.~Calderer, R.~S. Cabral, R.~Stojnic, R.~Raileanu, R.~Girdhar, R.~Patel, R.~Sauvestre, R.~Polidoro, R.~Sumbaly, R.~Taylor, R.~Silva, R.~Hou, R.~Wang, S.~Hosseini, S.~Chennabasappa, S.~Singh, S.~Bell, S.~S. Kim, S.~Edunov, S.~Nie, S.~Narang, S.~Raparthy, S.~Shen, S.~Wan, S.~Bhosale, S.~Zhang, S.~Vandenhende, S.~Batra, S.~Whitman, S.~Sootla, S.~Collot,
  S.~Gururangan, S.~Borodinsky, T.~Herman, T.~Fowler, T.~Sheasha, T.~Georgiou, T.~Scialom, T.~Speckbacher, T.~Mihaylov, T.~Xiao, U.~Karn, V.~Goswami, V.~Gupta, V.~Ramanathan, V.~Kerkez, V.~Gonguet, V.~Do, V.~Vogeti, V.~Petrovic, W.~Chu, W.~Xiong, W.~Fu, W.~Meers, X.~Martinet, X.~Wang, X.~E. Tan, X.~Xie, X.~Jia, X.~Wang, Y.~Goldschlag, Y.~Gaur, Y.~Babaei, Y.~Wen, Y.~Song, Y.~Zhang, Y.~Li, Y.~Mao, Z.~D. Coudert, Z.~Yan, Z.~Chen, Z.~Papakipos, A.~Singh, A.~Grattafiori, A.~Jain, A.~Kelsey, A.~Shajnfeld, A.~Gangidi, A.~Victoria, A.~Goldstand, A.~Menon, A.~Sharma, A.~Boesenberg, A.~Vaughan, A.~Baevski, A.~Feinstein, A.~Kallet, A.~Sangani, A.~Yunus, A.~Lupu, A.~Alvarado, A.~Caples, A.~Gu, A.~Ho, A.~Poulton, A.~Ryan, A.~Ramchandani, A.~Franco, A.~Saraf, A.~Chowdhury, A.~Gabriel, A.~Bharambe, A.~Eisenman, A.~Yazdan, B.~James, B.~Maurer, B.~Leonhardi, B.~Huang, B.~Loyd, B.~D. Paola, B.~Paranjape, B.~Liu, B.~Wu, B.~Ni, B.~Hancock, B.~Wasti, B.~Spence, B.~Stojkovic, B.~Gamido, B.~Montalvo, C.~Parker, C.~Burton, C.~Mejia,
  C.~Wang, C.~Kim, C.~Zhou, C.~Hu, C.-H. Chu, C.~Cai, C.~Tindal, C.~Feichtenhofer, D.~Civin, D.~Beaty, D.~Kreymer, D.~Li, D.~Wyatt, D.~Adkins, D.~Xu, D.~Testuggine, D.~David, D.~Parikh, D.~Liskovich, D.~Foss, D.~Wang, D.~Le, D.~Holland, E.~Dowling, E.~Jamil, E.~Montgomery, E.~Presani, E.~Hahn, E.~Wood, E.~Brinkman, E.~Arcaute, E.~Dunbar, E.~Smothers, F.~Sun, F.~Kreuk, F.~Tian, F.~Ozgenel, F.~Caggioni, F.~Guzmán, F.~Kanayet, F.~Seide, G.~M. Florez, G.~Schwarz, G.~Badeer, G.~Swee, G.~Halpern, G.~Thattai, G.~Herman, G.~Sizov, Guangyi, Zhang, G.~Lakshminarayanan, H.~Shojanazeri, H.~Zou, H.~Wang, H.~Zha, H.~Habeeb, H.~Rudolph, H.~Suk, H.~Aspegren, H.~Goldman, I.~Molybog, I.~Tufanov, I.-E. Veliche, I.~Gat, J.~Weissman, J.~Geboski, J.~Kohli, J.~Asher, J.-B. Gaya, J.~Marcus, J.~Tang, J.~Chan, J.~Zhen, J.~Reizenstein, J.~Teboul, J.~Zhong, J.~Jin, J.~Yang, J.~Cummings, J.~Carvill, J.~Shepard, J.~McPhie, J.~Torres, J.~Ginsburg, J.~Wang, K.~Wu, K.~H. U, K.~Saxena, K.~Prasad, K.~Khandelwal, K.~Zand, K.~Matosich,
  K.~Veeraraghavan, K.~Michelena, K.~Li, K.~Huang, K.~Chawla, K.~Lakhotia, K.~Huang, L.~Chen, L.~Garg, L.~A, L.~Silva, L.~Bell, L.~Zhang, L.~Guo, L.~Yu, L.~Moshkovich, L.~Wehrstedt, M.~Khabsa, M.~Avalani, M.~Bhatt, M.~Tsimpoukelli, M.~Mankus, M.~Hasson, M.~Lennie, M.~Reso, M.~Groshev, M.~Naumov, M.~Lathi, M.~Keneally, M.~L. Seltzer, M.~Valko, M.~Restrepo, M.~Patel, M.~Vyatskov, M.~Samvelyan, M.~Clark, M.~Macey, M.~Wang, M.~J. Hermoso, M.~Metanat, M.~Rastegari, M.~Bansal, N.~Santhanam, N.~Parks, N.~White, N.~Bawa, N.~Singhal, N.~Egebo, N.~Usunier, N.~P. Laptev, N.~Dong, N.~Zhang, N.~Cheng, O.~Chernoguz, O.~Hart, O.~Salpekar, O.~Kalinli, P.~Kent, P.~Parekh, P.~Saab, P.~Balaji, P.~Rittner, P.~Bontrager, P.~Roux, P.~Dollar, P.~Zvyagina, P.~Ratanchandani, P.~Yuvraj, Q.~Liang, R.~Alao, R.~Rodriguez, R.~Ayub, R.~Murthy, R.~Nayani, R.~Mitra, R.~Li, R.~Hogan, R.~Battey, R.~Wang, R.~Maheswari, R.~Howes, R.~Rinott, S.~J. Bondu, S.~Datta, S.~Chugh, S.~Hunt, S.~Dhillon, S.~Sidorov, S.~Pan, S.~Verma, S.~Yamamoto,
  S.~Ramaswamy, S.~Lindsay, S.~Lindsay, S.~Feng, S.~Lin, S.~C. Zha, S.~Shankar, S.~Zhang, S.~Zhang, S.~Wang, S.~Agarwal, S.~Sajuyigbe, S.~Chintala, S.~Max, S.~Chen, S.~Kehoe, S.~Satterfield, S.~Govindaprasad, S.~Gupta, S.~Cho, S.~Virk, S.~Subramanian, S.~Choudhury, S.~Goldman, T.~Remez, T.~Glaser, T.~Best, T.~Kohler, T.~Robinson, T.~Li, T.~Zhang, T.~Matthews, T.~Chou, T.~Shaked, V.~Vontimitta, V.~Ajayi, V.~Montanez, V.~Mohan, V.~S. Kumar, V.~Mangla, V.~Ionescu, V.~Poenaru, V.~T. Mihailescu, V.~Ivanov, W.~Li, W.~Wang, W.~Jiang, W.~Bouaziz, W.~Constable, X.~Tang, X.~Wang, X.~Wu, X.~Wang, X.~Xia, X.~Wu, X.~Gao, Y.~Chen, Y.~Hu, Y.~Jia, Y.~Qi, Y.~Li, Y.~Zhang, Y.~Zhang, Y.~Adi, Y.~Nam, Yu, Wang, Y.~Hao, Y.~Qian, Y.~He, Z.~Rait, Z.~DeVito, Z.~Rosnbrick, Z.~Wen, Z.~Yang, and Z.~Zhao.
\newblock The llama 3 herd of models, 2024.
\newblock URL \url{https://arxiv.org/abs/2407.21783}.

\bibitem[Madaan et~al.(2023)Madaan, Tandon, Gupta, Hallinan, Gao, Wiegreffe, Alon, Dziri, Prabhumoye, Yang, Gupta, Majumder, Hermann, Welleck, Yazdanbakhsh, and Clark]{madaan2023selfrefineiterativerefinementselffeedback}
A.~Madaan, N.~Tandon, P.~Gupta, S.~Hallinan, L.~Gao, S.~Wiegreffe, U.~Alon, N.~Dziri, S.~Prabhumoye, Y.~Yang, S.~Gupta, B.~P. Majumder, K.~Hermann, S.~Welleck, A.~Yazdanbakhsh, and P.~Clark.
\newblock Self-refine: Iterative refinement with self-feedback, 2023.
\newblock URL \url{https://arxiv.org/abs/2303.17651}.

\bibitem[Ling et~al.(2023)Ling, Fang, Li, Huang, Lee, Memisevic, and Su]{ling2023deductiveverificationchainofthoughtreasoning}
Z.~Ling, Y.~Fang, X.~Li, Z.~Huang, M.~Lee, R.~Memisevic, and H.~Su.
\newblock Deductive verification of chain-of-thought reasoning, 2023.
\newblock URL \url{https://arxiv.org/abs/2306.03872}.

\bibitem[Chen et~al.(2023)Chen, Aksitov, Alon, Ren, Xiao, Yin, Prakash, Sutton, Wang, and Zhou]{chen2023universalselfconsistencylargelanguage}
X.~Chen, R.~Aksitov, U.~Alon, J.~Ren, K.~Xiao, P.~Yin, S.~Prakash, C.~Sutton, X.~Wang, and D.~Zhou.
\newblock Universal self-consistency for large language model generation, 2023.
\newblock URL \url{https://arxiv.org/abs/2311.17311}.

\bibitem[Kemp et~al.(2022)Kemp, Edsinger, Clever, and Matulevich]{kemp2022design}
C.~C. Kemp, A.~Edsinger, H.~M. Clever, and B.~Matulevich.
\newblock The design of stretch: A compact, lightweight mobile manipulator for indoor human environments, 2022.

\bibitem[zed()]{zed}
Zed 2i stereo camera.
\newblock URL \url{https://store.stereolabs.com/products/zed-2i?variant=41379929096348}.

\end{thebibliography}

\clearpage
\part{Appendix} 
\parttoc 

\section{Approach Details}

\subsection{Active Preference Learning\label{app:apl_app_details}}

\subsubsection{Detailed Algorithm.}
Algorithm~\ref{alg:pref_learning} shows the detailed workflow of \apricot{}'s active preference learning module. 
We note that, to expedite computation, we generate the candidate questions before entering the main active preference learning loop. 
For each candidate preference pair $(\theta_i, \theta_j)$, the question-generating LLM also receives the initial state $s_0$, the task $\mathcal{T}$, the candidate plans corresponding to the pair $\xi_i$,  $\xi_j$, and each plan's rewards given one of the candidate preferences $(\mathcal{R}(\xi_i, \theta_i), \mathcal{R}(\xi_i, \theta_j))$ and $(\mathcal{R}(\xi_j, \theta_i), \mathcal{R}(\xi_j, \theta_j))$.
During the main loop, once that question is asked, it is removed from the list of candidate questions.

\subsubsection{Hyperparameters.}
We set the number of candidate preferences to $N = 5$ and the number of questions to generate for each preference pair to $M = 2$, so we generate $20$ questions in total. 
For the terminating function, which determines when the preference prior $P(\theta)$ is sufficient, we set the terminating threshold to $0.07$.

The components that require an LLM prompt are: generating candidate preferences, calculating the reward of a plan given a preference, generating candidate questions, estimating the likelihood of an answer given a preference and a question.
All LLM calls use temperature of 0.7.
We use \texttt{gpt-4} for ``generating candidate preferences" because this task requires reasoning about the similarities and differences across multiple demonstrations and proposing a diverse set of preferences that are consistent with the demonstrations.
We use \texttt{gpt-3-turbo} for ``estimating the likelihood of answer" because this task calls the LLM the most number of times ($N \times M \times {N \choose 2}$ times), and this task has a simple input-output space (inputs include a preference to roleplay and a yes-or-no question to answer; outputs include reasoning and yes/no answer). 
We use \texttt{gpt-4o} for the rest of the tasks.
Prompts can be found in Appendix~\ref{app:apl_prompts}.

\e{
\subsubsection{Computation overhead.}
For smooth user interaction, It is crucial that a system is fast when it actually asks user clarification questions. 

\apricot{} can complete an iteration of question-asking quickly (about 0.0001 min/iter) because it can find the optimal question by simply doing matrix calculations with the newly updated prior. All the necessary LLM calls, which naturally take longer, are completed before \apricot{} starts asking questions. 
We also quantify the estimated token usage and wall-clock time for each step of the active preference learning algorithm in Table~\ref{table:apl_compute}.

In addition, because \apricot{} combines LLMs with active preference learning, it is able to scale well with increasing complexity (e.g. in preference types) because adding new preference types just requires adding a short description to the prompt. It can also scale well with an increasing number of user demonstrations. Because these demonstrations can become too long if they are simultaneously processed together, \apricot{} can adapt the approach in~\cite{wang2023demo2code}, which summarizes each demonstration individually first before analyzing common patterns across all summaries.

\begin{table}[]
\begin{tabular}{llll}
\toprule
Steps & \# LLM calls & Avg. Tokens & Avg. Time (min) \\
\midrule
Gen N preferences & 2 & 13000 & 2.5 \\
Gen N plans & 5 & 2800 & 0.25 \\
Calculate plans' rewards & 25 & 90000 & 2.5 \\
Propose candidate questions & 10 & 27000 & 1.16 \\
Calculate the likelihood & 100 & 200000 & 3.33 \\
\midrule
One iteration of asking clarification questions & N/A & N/A & 0.0001\\
\bottomrule
\end{tabular}
\caption{Given $N=5$, we report the number of LLM calls, the average number of tokens used, and the average wallclock time for each step in \apricot{}'s active preference learning module.}
\label{table:apl_compute}
\end{table}
}

\begin{algorithm}[t]
\caption{\apricot{} LLM-based Bayesian Active Preference Learning}
\label{alg:pref_learning}
\begin{algorithmic}
    \State {\bfseries Input:} Demonstrations $\mathcal{D}$, Initial State $s_0$, Task $\mathcal{T}$, Terminating Function $f(P(\theta))$
    \State {\bfseries LLM Prompts:} LLM to generate candidate preferences $P^{\text{gen-pref}}$, LLM to generate plans $P^{\text{plan}}$, LLM to generate questions $P^{\text{gen-q}}$
    \State {\bfseries Output:} Best plan $\xi_i$, Corresponding preference of the best preference $\theta_i$
    \State \algcommentlight{Generate candidates}
    \State Generate candidate preferences $\{\theta_i\}_{i=1}^N$: $\theta \sim P^{\text{gen-pref}}(\theta | \mathcal{D})$ 
    \State Initialize Prior $P(\theta_i) = \frac{1}{N}$
    \State Generate candidate plans $\{\xi_i\}_{i=1}^N$: $\xi \sim P^{\text{plan}}(\xi | s_0, \mathcal{T}, \theta_i), i=1,\dots,N$
    \State Construct $\mathcal{Q}$ by generating $M$ questions for all preference pairs: $q \sim P^{gen-q}(q | \theta_i, \theta_j)$
    \State \algcommentlight{While not exist a plan $\xi$ that has sufficiently high rewards given all preferences in $\{\theta_i\}_{i=1}^N$}
    \State \While {not $f(P(\theta))$}
    {
        \State Find the best question: $q^* \gets \arg\max_{q \in \mathcal{Q}} \quad H[P(\theta)] - \sum_o^{\{\text{yes}, \text{no}\}} \sum_i^N P(o|q, \theta_i) H [P(\theta_i | q, o)]$
        \State Query the user with $q^*$ and gets answer $o_h$
        \State Remove $q^*$ from $\mathcal{Q}$
        \State Update prior: $P(\theta) = P(\theta | q=q^*, o=o_h)$
    }
    \State {\bfseries Return} $\xi_i$ that satisfies the Terminating Function $f$, the corresponding $\theta_i$
\end{algorithmic}
\end{algorithm}

\subsubsection{Discussion About the Terminating Condition\label{app:terminate_cond_proof}}
We derive our termination condition as a principled approximation of the ideal termination condition, and provide additional analysis to quantify the performance gap due to the approximations. 

\textbf{The Ideal Terminating Condition.}

Recall $\mathcal{R}(\xi, \theta)$ is the reward of a plan $\xi$ on whether it satisfies the preference $\theta$. 
We define a deterministic function $\mathcal{O}(q, \theta) = \argmax_{o\in\{\text{yes}, \text{no}\}} P(o | q, \theta)$ that outputs the answer to an question $q$ given a preference $\theta$.
Let $\theta^*$ be the ground-truth preference unknown to the system, $\mathcal{D}$ be a small set of demonstrated plans $\xi_\mathcal{D}$, and $\mathcal{H} = \{(q, o), \dots\}$ be a history of the questions asked so far and the corresponding answers from the user with $\theta^*$. 
We define the set of consistent preferences as
\begin{align}
    \Theta = \{\theta : \underbrace{(\forall \xi_\mathcal{D} \in \mathcal{D}, \mathcal{R}(\xi_\mathcal{D}, \theta)= 1)}_{\text{consistent with demos}} \land \underbrace{(\forall (q, o) \in \mathcal{H}, \mathcal{O}(q, \theta)=o)}_{\text{consistent with user answers}}\}.
\end{align} 
The ideal terminating condition requires the module to keep asking questions until
\begin{equation}
    \exists \xi \quad \mathrm{s.t} \quad \forall \theta \in \Theta, \mathcal{R}(\xi, \theta) = 1.
\end{equation}
However, it is computationally intractable to enumerate through all the possible preferences and compute optimal plans for each preference. We introduce a number of approximat ions.

\textbf{Approximation Assumptions.}

Approximating the aforementioned ideal terminating condition requires the following assumptions:
\begin{enumerate}
    \item \textbf{When we sample a set of $N$ candidate preferences consistent with the demonstrations $\hat{\Theta} = \{\theta_i : \forall \xi_{\mathcal{D}} \in \mathcal{D}, \mathcal{R}(\xi_{\mathcal{D}}, \theta_i) = 1\}_{i=1}^N$, one of the sampled candidates $\theta$ is value-equivalent to the ground-truth preference $\theta^*$.} This candidate must be consistent with the user's answers, i.e., $\forall (q, o) \in \mathcal{H}, \mathcal{O}(q, \theta)=o$. Also, 
    \begin{equation}\label{eq:realization_cost}
        \forall \xi, \|\mathcal{R}(\xi, \theta^*) - \mathcal{R}(\xi, \theta)\|_1 \leq \epsilon_{\text{realization}}.
    \end{equation}
    \item \textbf{The task planner has a suboptimality of $\epsilon_\text{planner}$.} This planner's suboptimality lower bounds the suboptimality of the outputted plan. Given a preference $\theta$, let the optimal plan be $\xi^* = \argmax_\xi \mathcal{R}(\xi, \theta)$. The performance difference between this optimal plan $\xi^*$ and the planner's plan $\hat{\xi}$ is:
\begin{equation}\label{eq:suboptimal_plan_cost}
        \mathcal{R}(\xi^*, \theta) - \mathcal{R}(\hat{\xi}, \theta) \leq \epsilon_{\text{planner}}.
    \end{equation}
\end{enumerate}


\textbf{Performance Bound Proof.}
\begin{theorem}
Under assumptions (\ref{eq:realization_cost}) and (\ref{eq:suboptimal_plan_cost}), given a problem with the ground-truths $\theta^*$ and $\xi^*$, \apricot{} outputs $\hat{\xi}$ when it follows (\ref{eq:terminate_fn}) the terminating condition $f(P(\theta)) = \exists {\xi \in \Xi} \quad \mathrm{s.t} \quad \sum_{i=1}^N P(\theta_i) \textsc{DisAdv}(\xi, \theta_i) \leq \epsilon$ based on (\ref{eq:disadv}) the disadvantage function $\textsc{DisAdv}(\xi_j, \theta_i) = \max_{\xi \in \Xi} \mathcal{R}(\xi, \theta_i) - \mathcal{R}(\xi_j, \theta_i)$. Its performance is bounded by
\begin{equation}
    \mathcal{R}(\xi^*, \theta^*) - \mathcal{R}(\hat{\xi}, \theta^*) \leq N \epsilon + 2 \epsilon_{\text{realization}} + \epsilon_{\text{planner}}.
\end{equation}
\end{theorem}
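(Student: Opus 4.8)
The plan is to chain together the three sources of error — the termination threshold, the realization gap, and the planner suboptimality — by inserting intermediate quantities and applying the triangle inequality. Let $\theta \in \hat{\Theta}$ be the sampled candidate that is value-equivalent to $\theta^*$ (Assumption 1), say $\theta = \theta_k$ for some index $k$, and let $\xi_k \in \Xi$ be the candidate plan the task planner produced for $\theta_k$. Let $\hat{\xi} \in \Xi$ be the plan that \apricot{} actually returns, i.e., the one witnessing the terminating condition (\ref{eq:terminate_fn}). The target is to bound $\mathcal{R}(\xi^*, \theta^*) - \mathcal{R}(\hat{\xi}, \theta^*)$, and the idea is to route from $\theta^*$ to $\theta_k$ and from $\hat{\xi}$ back to the best plan in the library for $\theta_k$.

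First I would pass from the ground-truth preference to the candidate $\theta_k$ on both plans using (\ref{eq:realization_cost}): since $\|\mathcal{R}(\xi, \theta^*) - \mathcal{R}(\xi, \theta_k)\|_1 \leq \epsilon_{\text{realization}}$ for every $\xi$, we get $\mathcal{R}(\xi^*, \theta^*) \leq \mathcal{R}(\xi^*, \theta_k) + \epsilon_{\text{realization}}$ and $\mathcal{R}(\hat{\xi}, \theta_k) \leq \mathcal{R}(\hat{\xi}, \theta^*) + \epsilon_{\text{realization}}$. Next, I would bound $\mathcal{R}(\xi^*, \theta_k)$ by the best library value $\max_{\xi \in \Xi} \mathcal{R}(\xi, \theta_k)$ plus the planner gap: $\xi_k$ is the planner's output for $\theta_k$, so by (\ref{eq:suboptimal_plan_cost}) applied with preference $\theta_k$, $\mathcal{R}(\xi^*_{\theta_k}, \theta_k) - \mathcal{R}(\xi_k, \theta_k) \leq \epsilon_{\text{planner}}$ where $\xi^*_{\theta_k} = \argmax_\xi \mathcal{R}(\xi, \theta_k)$; and since $\mathcal{R}(\xi^*, \theta_k) \leq \mathcal{R}(\xi^*_{\theta_k}, \theta_k)$ while $\mathcal{R}(\xi_k, \theta_k) \leq \max_{\xi \in \Xi} \mathcal{R}(\xi, \theta_k)$, this yields $\mathcal{R}(\xi^*, \theta_k) \leq \max_{\xi \in \Xi}\mathcal{R}(\xi, \theta_k) + \epsilon_{\text{planner}}$. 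Combining, $\mathcal{R}(\xi^*, \theta^*) - \mathcal{R}(\hat{\xi}, \theta^*) \leq \big(\max_{\xi \in \Xi}\mathcal{R}(\xi, \theta_k) - \mathcal{R}(\hat{\xi}, \theta_k)\big) + 2\epsilon_{\text{realization}} + \epsilon_{\text{planner}} = \textsc{DisAdv}(\hat{\xi}, \theta_k) + 2\epsilon_{\text{realization}} + \epsilon_{\text{planner}}$ by the definition (\ref{eq:disadv}).

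The remaining task is to bound $\textsc{DisAdv}(\hat{\xi}, \theta_k)$ by $N\epsilon$. This is where the terminating condition and the consistency of $\theta_k$ enter. Because $\hat{\xi}$ witnesses (\ref{eq:terminate_fn}), we have $\sum_{i=1}^N P(\theta_i)\,\textsc{DisAdv}(\hat{\xi},\theta_i) \leq \epsilon$. Since $\textsc{DisAdv} \geq 0$ termwise, each individual term obeys $P(\theta_k)\,\textsc{DisAdv}(\hat{\xi},\theta_k) \leq \epsilon$, so $\textsc{DisAdv}(\hat{\xi}, \theta_k) \leq \epsilon / P(\theta_k)$. I then need a lower bound on $P(\theta_k)$: because $\theta_k$ is consistent with every user answer in $\mathcal{H}$ (the consistency clause of Assumption 1), the Bayesian posterior updates never zero it out, and one argues it stays at least $1/N$ — intuitively, conditioning on an answer $o$ that $\theta_k$ would itself give cannot decrease $\theta_k$'s relative mass below the uniform floor it started at, so $P(\theta_k) \geq 1/N$. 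Substituting gives $\textsc{DisAdv}(\hat{\xi},\theta_k) \leq N\epsilon$, and adding the two earlier error terms closes the proof.

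The main obstacle I anticipate is the step $P(\theta_k) \geq 1/N$: this is not literally true for an arbitrary Bayes update with the soft Bradley–Terry likelihoods used in (\ref{eq:entropy_reduct}), since a "consistent" answer in the hard sense $\mathcal{O}(q,\theta_k) = o$ only means $P(o \mid q, \theta_k) > 1/2$, not $=1$, so the posterior of $\theta_k$ could in principle dip slightly. I expect the paper handles this by working in the idealized regime where the likelihood is effectively deterministic (consistent with defining $\mathcal{O}$ via $\argmax$ in the "Ideal Terminating Condition" paragraph), so that each update multiplies $\theta_k$'s weight by a factor $\geq$ the factor applied to any competitor, preserving $P(\theta_k) \geq 1/N$ by induction on $|\mathcal{H}|$; I would state this deterministic-likelihood reading explicitly as the operating assumption and then the induction is routine. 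The rest is just triangle-inequality bookkeeping and should go through cleanly.
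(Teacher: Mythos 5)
Your proposal follows essentially the same route as the paper's proof: isolate the value-equivalent candidate $\hat{\theta}$, use non-negativity of the terms in the terminating condition to get $P(\hat{\theta})\,\textsc{DisAdv}(\hat{\xi},\hat{\theta}) \leq \epsilon$, convert to $\theta^*$ via two applications of the realization bound, add the planner suboptimality, and finish with $P(\hat{\theta}) \geq 1/N$. The obstacle you flag at the end is real but is not resolved by the paper either --- its proof simply asserts that the probability of a consistent candidate ``will never decrease'' under the Bayesian update, which, as you note, holds under a deterministic-likelihood reading but not literally under the soft Bradley--Terry likelihoods used in the question-selection step.
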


\begin{proof}
Let $\hat{\theta}$ be the candidate preference that is value-equivalent to $\theta^*$.
Note that probabilities are non-negatives (i.e., $\forall \theta \in \hat{\Theta}, P(\theta) \geq 0$) and outputs of the disadvantage function are also non-negative (i.e., $\forall \xi, \forall \theta \in \hat{\Theta}, \textsc{DisAdv}(\xi, \theta) \geq 0$).
Based on the terminating condition (\ref{eq:terminate_fn}), the worst case on $\hat{\theta}$ is:
\begin{equation}\label{proof:disadv}
    P(\hat{\theta}) \textsc{DisAdv}(\hat{\xi}, \hat{\theta}) \leq \epsilon
\end{equation}
In other words, this scenario is when only one candidate $\hat{\theta}$ is value-equivalent to $\theta^*$, while all the remaining $N-1$ preferences $\hat{\Theta} \setminus \{\hat{\theta}\}$ agree on an incorrect plan $\hat{\xi}$ (i.e. $\forall \theta \in \hat{\Theta} \setminus \{\hat{\theta}\}, \mathcal{R}(\hat{\xi}, \theta) = 1$).

Expanding (\ref{proof:disadv}), we show:
\begin{align}
     \max_{\xi \in \Xi} \mathcal{R}(\xi, \hat{\theta}) - \mathcal{R}(\hat{\xi}, \hat{\theta}) &< \frac{\epsilon}{P(\hat{\theta})} & \text{via (\ref{eq:disadv}}) \\
     \max_{\xi \in \Xi} \mathcal{R}(\xi, \theta^*) - \mathcal{R}(\hat{\xi},\theta^*) &\leq \frac{\epsilon}{P(\hat{\theta})} + 2 \epsilon_{\text{realization}} & \text{via (\ref{eq:realization_cost})} \\
    \mathcal{R}(\xi^*, \theta^*) - \mathcal{R}(\hat{\xi}, \theta^*) &\leq \frac{\epsilon}{P(\hat{\theta})} + 2 \epsilon_{\text{realization}} + \epsilon_{\text{planner}} & \text{via (\ref{eq:suboptimal_plan_cost})}  \\
    \mathcal{R}(\xi^*, \theta^*) - \mathcal{R}(\hat{\xi}, \theta^*) &\leq N \epsilon + 2 \epsilon_{\text{realization}} + \epsilon_{\text{planner}}.  
\end{align}
The last inequality is via the fact that $P(\hat{\theta}) \geq \frac{1}{N}$. Because the initial prior is $\frac{1}{N}$ and the answers of $\hat{\theta}$ are consistent with the user's answers, its probability will never decrease.
\end{proof}

\textbf{Choose the terminating threshold $\epsilon$ in practice.}
Similar to the proof, we consider the worst-case scenario before APRICOT starts asking any questions: $\hat{\theta}$ is the only candidate preference that is value-equivalent to the ground-truth preference $\theta^*$, and $\forall \theta \in \hat{\Theta} \setminus \{\hat{\theta}\}, \textsc{DisAdv}(\hat{\xi}, \theta) = 0$ for an incorrect plan $\hat{\xi}$.

Ideally, APRICOT should not pick this incorrect plan, and it should ask clarification questions instead. To avoid satisfying the terminating condition (\ref{eq:terminate_fn}), the following must be true
\begin{align}
    \epsilon &\leq P(\hat{\theta}) \textsc{DisAdv}(\hat{\xi}, \hat{\theta}) & \text{via (\ref{proof:disadv})}\\
    \epsilon &\leq \frac{1}{N} \textsc{DisAdv}(\hat{\xi}, \hat{\theta}). & \text{via } P(\hat{\theta}) \geq \frac{1}{N} \text{explained in the previous section}
\end{align}

Thus, to set the terminating condition, we need to determine the minimum disadvantage that an incorrect plan can get given the correct candidate preference $\hat{\theta}$. We empirically find setting $x=0.35$ has worked well, thereby resulting in threshold $\epsilon=0.07$.

\subsection{Task Planner\label{app:tp_app_detail}}

\textbf{Beam Search.} Recall that \apricot{}'s task planner first uses an LLM to output a semantic plan given a task $\mathcal{T}$ (a list of objects to put away) and an initial state $s_0$ (a dictionary of objects initially at each semantic location in the fridge).
Then, it uses a beam search to find the XYZ placement location for each object by sequentially following the semantic plan.

The world model contains a mapping from a semantic location (e.g. ``left side of top shelf") to a range of XYZ coordinates that corresponds to that region.
With this semantic-location-to-geometric-range mapping, the beam search samples candidate XYZ placement location for each object based on its planned semantic place location. 
Specifically, it samples in XY space since objects can be placed anywhere length-wise and depth-wise on a shelf.
Z location is based on the height of the shelf because we do not consider stacking objects atop one another. 
The best candidates are those where the placement is geometrically feasible (no fridge or object collisions). 
These candidates are maintained in the beam search nodes. 

The beam search continues until all objects are placed into the fridge. 
An optimal plan is one where all objects acquire a geometrically-feasible, collision-free XYZ placement location. 
If the best plan in the search only places a subset of objects, the semantic plan is assumed to be geometrically infeasible and the LLM is reprompted with information about which objects lack a feasible XYZ placement location.


\textbf{Hyperparameters.} For semantic plan generation, we use \texttt{gpt-4-turbo} with a temperature of 0.7. For the beam search, we maintain 10 best candidates and sample 10 evenly-spaced points with the location regions. For feedback retries, we allow up to 4 attempts. Prompts can be found in Appendix~\ref{app:tp_prompts}.

\subsection{Real Robot System\label{app:rr_app_detail}}
\textbf{RL Policy Training.}
The policy \texttt{pick(<obj>)} and \texttt{place(<loc>)} can both be abstracted as the problem of given a goal XYZ position and the current robot joint state, output a sequence of actions that allows the gripper to reach the goal straight on. 
We simulate the robot's kinematics and include the $L_1$ norm between the goal and current positions as the observation space.
We define the robot's teleoperation commands as the discrete action space. 
To guide the agent to learn a suitable grasp policy, we implemented a four-phase reward function: 
\begin{enumerate}
    \item {Align the Gripper horizontally}: The goal object should align in between the robot's gripper (X coordinate). 
    \item {Align the Gripper height}: The gripper should align with the goal height (Z coordinate).
    \item {Extend the gripper}: The gripper should extend to pick/place the object (Y coordinate). 
    \item {Goal Achievement}: A reward of +1 is given if the robot successfully reaches the goal.
\end{enumerate}

\begin{equation}
r =
\begin{cases}
    -\Delta X + \text{max}(\Delta Y) + \text{max}(\Delta Z), & \Delta X \geq \epsilon_{x} \\
    -\Delta Z - \Delta X + \text{max}(\Delta Y), & \Delta Z \geq \epsilon_{z}, \Delta x \geq \epsilon_{x}  \\
    -\Delta Y - \Delta Z - \Delta X, & \Delta Y \geq \epsilon_{y}, \Delta Z \geq \epsilon_{z}, \Delta x \geq \epsilon_{x}  \\
    1, & \Delta Y < \epsilon_{y}, \Delta Z < \epsilon_{z}, \Delta x < \epsilon_{x}  
\end{cases}
\end{equation}

With this reward function, we train a Proximal Policy Optimization agent using the implementation from~\cite{schulman2017proximal}, to predict actions that will lead to a suitable grasp on the goal position.

\section{Experiments Details}

\subsection{Active Preference Learning Experiments Details}

\subsubsection{Benchmark Dataset\label{app:apl_exp_dataset}}
The dataset contains 100 test cases, each containing a ground-truth preference, two demonstrations with before and after state of user putting objects into fridge, one scenario to solve.
The ground-truth preference is created by (1) randomly generating the special preferences for 1 or 2 object categories and (2) randomly assigning simpler preferences for the remaining object categories. 
For the demonstrations and the test scenario, we randomly sample objects and specific placement coordinates that satisfy the generated ground-truth preference.
Each demonstration has 3 objects initially in the fridge and 4 to place into the fridge.
Meanwhile, each test scenario has 4 objects initially in the fridge and 6 to place into the fridge.
The objects in the fridge can be categorized into 5 categories: fruits, vegetables, condiments, dairy products, juice and soft drinks.
The test cases are also grouped into 5 categories, where each categories contain 20 test cases: specific location, general location, relative position, subcategory exceptions, and conditionals. 
Fig~\ref{fig:apl_quant} shows examples of each category.

The test cases ``specific location" require each category to be placed at a specific location in the fridge (e.g. ``fruit on the left side of top shelf"), while the rest of the categories have special requirements on a subset of the categories and only specific location requirements on the remaining categories. 
For the 20 ``general location" test cases, half of them require only one category to be at a general location (e.g. ``fruits on the left side of the fridge"), and the other half require two categories to be at a general location (e.g. ``fruits on the left side of the fridge, and vegetables on the right side of the fridge). 
In addition, for the 20 ``relative position" test cases, half of them require one category to be together with no specific location (e.g. ``fruits should be placed together regardless of what shelf they are on"), and the other half require two categories to be together (e.g. ``fruits and vegetables should be placed together regardless of what shelf they are on.")

\subsubsection{Setup\label{app:apl_exp_setup}}
The prompts for the baseline approaches are in Appendix~\ref{app:baseline_prompts}.
We run each approach on a test case once. 
We make \apricot{} generate $N-1=4$ candidate preferences and add the preference generated by \noninteract{} as the fifth candidate preference.
We also make \candllmqa{} have the same set of candidate preferences and candidate plans and \apricot{} for a given test case.

\subsubsection{Failure Modes of \apricot{}\label{app:apl_failure}}
We find that \apricot{} fails mainly because the LLMs hallucinate and make mistakes when they are called or used for evaluation.

After analyzing all the instances where \apricot{} fails, we identify the following failure modes (with the most common failure modes sorted at the top):
\begin{enumerate}
    \item \textbf{Incorrect question answering:} The LLMs can answer a question incorrectly due to hallucinating or interpreting the question incorrectly. This can be further categorized as: \begin{itemize}
        \item \textbf{Mistakes during likelihood estimation} which is used to select the optimal questions.
        \item \textbf{Mistakes during evaluation} when the LLM pretends to be the user with the ground-truth preference and interacts with \apricot{}. 
    \end{itemize}
    \item \textbf{Ground-truth preference is not one of the candidate preferences:} Sometimes, the LLMs do not generate a candidate preference that fully captures the same requirements as the ground-truth preference. 
    \item \textbf{Mismatched understanding of object categories:} For example, some demonstrations use "tomatoes" as examples for vegetable placements because we treated tomatoes as vegetables, but the LLM classifies "tomatoes" as a fruit. 
    \item \textbf{Incorrect reward calculation:} The LLMs can make mistakes when calculating the reward of a plan and determining the percentage of object placements that satisfy a preference.
    \item \textbf{Hallucinations when interpreting demonstrations:} The LLMs occasionally ignore some objects that appeared in the demonstrations.
\end{enumerate}

Among the 42/100 test cases that \apricot{} fails to solve, Table~\ref{table:apl_failure_mode} reports the percentage of failures that are caused by each failure mode. 

\begin{table}[]
\begin{tabular}{cccccc}
\toprule
\multicolumn{2}{c}{Incorrect QA} & \multirow{2}{*}{No GT Pref} & \multirow{2}{*}{Mismatched} & \multirow{2}{*}{Incorrect reward} & \multirow{2}{*}{Demo} \\
During likelihood est & During evaluation &  &  &  &  \\
 \midrule
 14.3\% & 21.4\% & 26.2\% & 19.0\% & 11.9\% & 7.20\% \\
\bottomrule
\end{tabular}
\caption{Among the 42/100 test cases that \apricot{} fails to solve, each column shows the percentage of failures that a caused by a specific failure mode.}
\label{table:apl_failure_mode}
\end{table}

The most common failure mode, ``Incorrect question answering" (causing 35.7\% of the failures in total), explains why \apricot{}'s performance is not higher. The LLMs that approximate the model of the human are imperfect. The subcategory ``Mistakes during evaluation" causes 21.4\% of the failures. When evaluation uses the LLMs to mimic the actual user, the LLMs can answer \apricot{}'s question incorrectly, thereby causing \apricot{} to also update its prior incorrectly.

For future work, we plan to improve the LLMs by (1) Switching to more powerful models: We currently use \texttt{gpt-3.5-turbo} to answer the questions, but we can switch to more powerful models such as \texttt{gpt-4o} or \texttt{Llama3}~\cite{dubey2024llama3herdmodels} to boost performances. (2) Aligning the LLMs' understanding of object categories with the user's: In the prompt, we can include information about how the user typically categorizes the objects that have ambiguous categories (e.g. ``tomato," ``bell pepper"). (3) Adding self-consistency verification: A verification module (which can be another LLM) can verify the output of the model~\cite{madaan2023selfrefineiterativerefinementselffeedback, ling2023deductiveverificationchainofthoughtreasoning, chen2023universalselfconsistencylargelanguage}.

\subsubsection{Tradeoff between query efficiency and preference accuracy}
We hypothesize that, ideally, as the number of queries increases, preference accuracy should also increase and eventually plateau.

However, because the most common failure mode is that the LLMs make mistakes when answering questions (see Section~\ref{app:apl_failure}), we find that empirically asking too little and too many questions would both harm the performance. 
Specifically, we test on terminating threshold ranging from 0.09 to 0.03 and plot the number of queries v.s. preference accuracy in Figure~\ref{fig:apl_query_vs_acc}. For the test cases under the more complex preference types ("subcategory exception" and "condition"), asking too many questions actually lowers the preference accuracy. 

\begin{figure}[!t]
    \centering
    \includegraphics[width=0.7\linewidth]{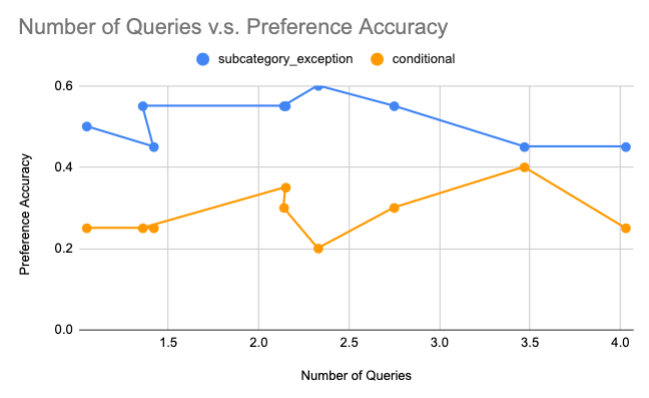}
    \vspace{-5mm}
    \captionsetup{width=\textwidth}
    \caption{\textbf{The number of queries v.s. preference accuracy.} We test \apricot{} on a terminating threshold ranging from 0.09 to 0.03. Contrary to the hypothesis, a large number of questions actually lowers the preference accuracy. }
    \label{fig:apl_query_vs_acc}
\end{figure}

\subsection{Task Planner Real Robot Experiments}

\begin{figure}[h]
    \centering
    \includegraphics[width=\linewidth]{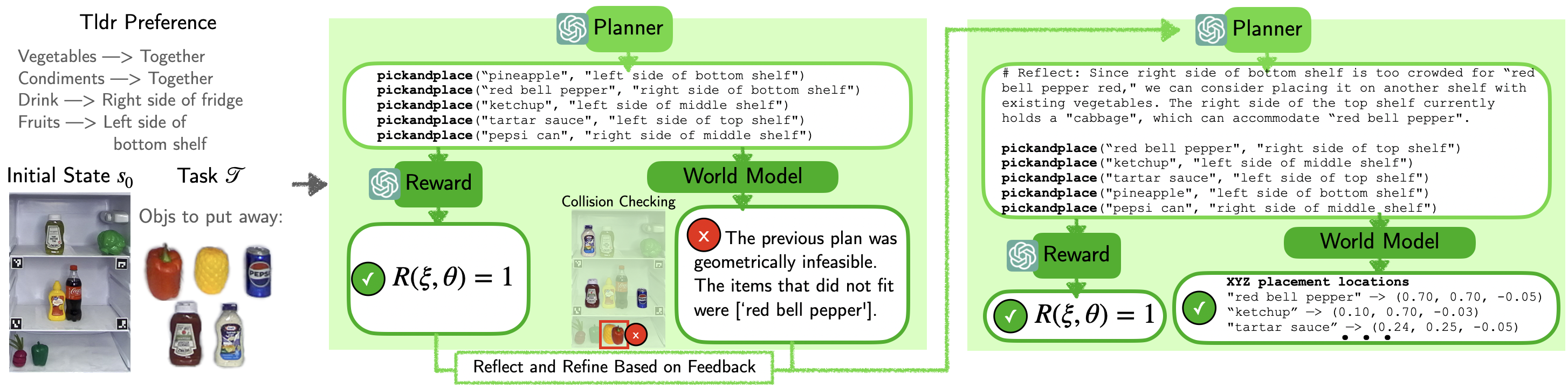}
    \captionsetup{width=\textwidth}
    \caption{Example of \apricot{}'s task planner reflecting and refining its plan based on feedback.}
    \label{fig:reflect_example}
\end{figure}

\subsubsection{Setup\label{app:tp_exp_setup}}
Similar to the benchmark dataset for active preference learning experiments, the 9 real-world test cases (a ground-truth preference, two demonstrations, a test scenario) are randomly generated while following dataset generation rules to create scenarios where the geometric constraints are in tension with the preference.
Each test case's ground-truth preferences are generated by first assigning some special preference for 1 to 3 object categories. For the demonstrations and the initial condition to test on, we randomly sample objects from a list of objects that we have, and we design each object's placement coordinates to be close to each other to create a cluttered environment. 
We show the visual demonstration and the ground-truth preference in Fig.~\ref{fig:real_world_scenarios}.

For all real robot experiments, we use a 6-DoF Stretch Robot RE1~\cite{kemp2022design} as the mobile manipulator. 
A static RBG-D camera (ZED-2i~\cite{zed}) is mounted to have a direct view of the fridge.
A table is placed to the left of the fridge, on which objects that need to be put away by the robots are placed in a row with no occlusions.
Before the experiments, we have mapped the environment and determined the location of the table and the fridge in the map.
Images from the static camera are used by the perception module to track the current state of the world for the task planner and to convert user demonstrations into text-based demonstrations.
The point cloud from the static camera is used by the world model to collision-check potential XYZ placement location.
Meanwhile, the Stretch Robot's onboard RGB-D camera is used by the pick policy to identify the grasp position of an object.

We report the performance of \apricot{}'s active preference learning module in Table.~\ref{table:apl_real_world}.

\begin{figure}[!t]
    \centering
    \includegraphics[width=0.9\linewidth]{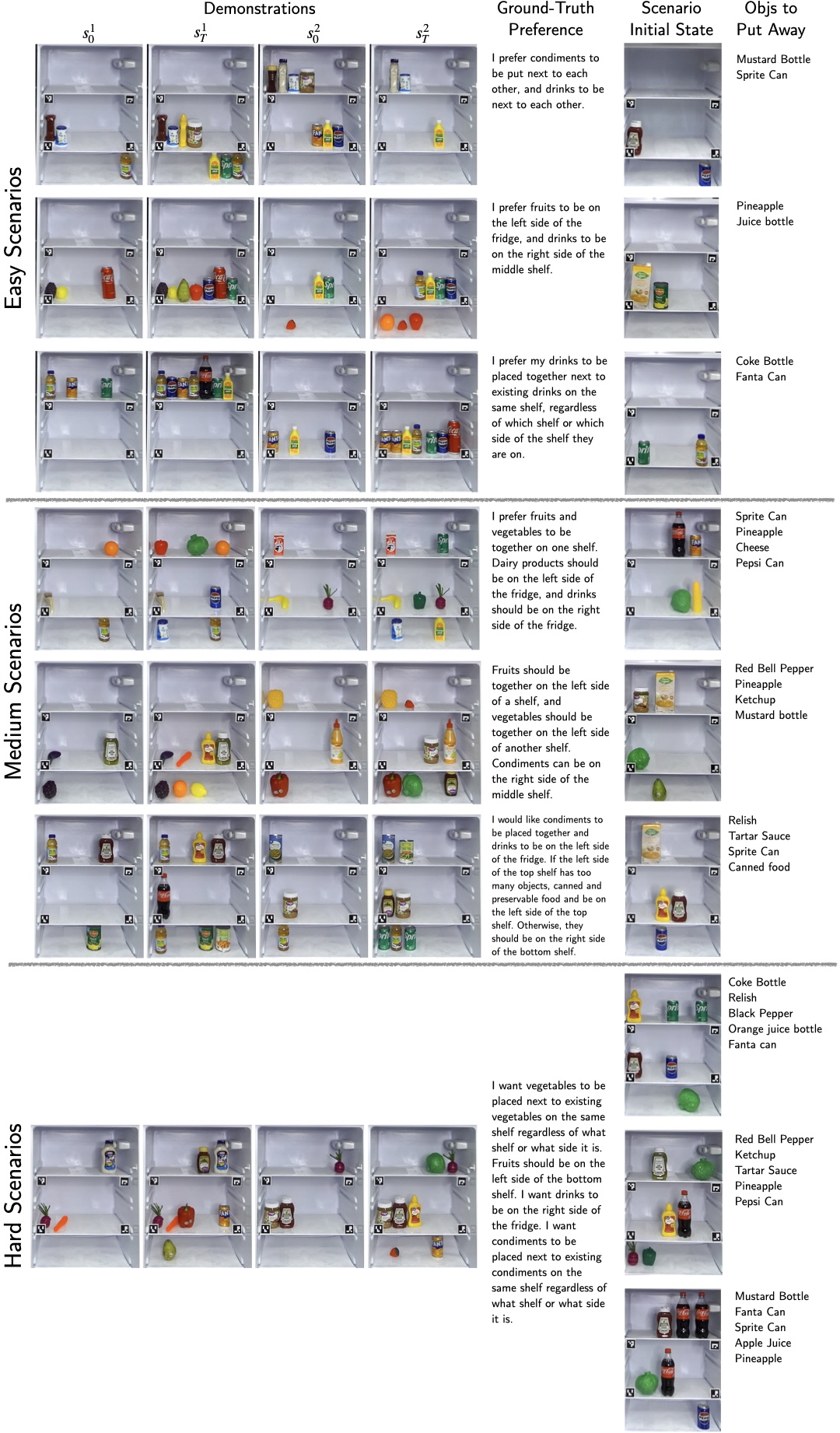}
    \captionsetup{width=\textwidth}
    \caption{Real world demonstrations and scenarios.}
    \label{fig:real_world_scenarios}
\end{figure}

\begin{table}[t]
\centering
\vspace{3pt}
\renewcommand{\arraystretch}{1.1}
\begin{tabular}{cccc}
\toprule
  & Scenario & Preference Accuracy & Number of Queries\\ 
\midrule
\parbox[t]{2mm}{\multirow{3}{*}{\rotatebox[origin=c]{90}{Easy}}} & 1 & 1.00 & 0.00 \\
 & 2 & 0.00 & 5.00 \\
& 3 & 1.00 & 0.00 \\
\midrule
\parbox[t]{2mm}{\multirow{3}{*}{\rotatebox[origin=c]{90}{Medium}}} & 1 & 0.00 & 0.00 \\
 & 2 & 1.00 & 1.00 \\
& 3 & 0.00 & 0.00 \\
\midrule
\parbox[t]{2mm}{\multirow{3}{*}{\rotatebox[origin=c]{90}{Hard}}} & 1 & 1.00 & 0.00 \\
 & 2 & 1.00 & 2.00 \\
& 3 & 1.00 & 1.00 \\
\midrule
\multicolumn{2}{c}{Overall}  & 0.56 & 1.00 \\
\bottomrule
\end{tabular}
\vspace*{1mm}
\caption{\apricot{} active preference learning module's performance on real-world scenarios. Note that the difficulty level is based on how difficult the task is for the task planner (i.e. the number of objects initially in the fridge and the number of objects to put away.)}
\label{table:apl_real_world}
\end{table}

\subsubsection{\apricot{} Plan Refinement Example\label{app:refine}}
Fig.~\ref{fig:reflect_example} shows an example of \apricot{}'s task planner refining its plan based on feedback.
When the LLM generates the semantic plan, because it does not have knowledge about the environmental constraints of the fridge, it decides to place both the pineapple and the red bell pepper on the left side of the bottom shelf, which would satisfy the user's preference.
However, only one object can be placed on the left side of the bottom shelf.
Given this plan, although the reward function determines that the semantic plan maximally satisfies the preference $\mathcal{R}(\xi, \theta) = 1$, the world model determines that red bell pepper will have a collision when being placed on the left side of the bottom shelf next to the pineapple.
This feedback gets included in the input for the LLM, so the LLM is able to reflect on why its current plan is geometrically infeasible and determine an alternative semantic placement location for the red bell pepper (``right side of the top shelf"). 
The new plan satisfies the preference by maximizing the reward and is geometrically feasible by converting the semantic plans into XYZ placement locations for all the objects. 

\begin{figure}[h]
    \centering
    \includegraphics[width=\linewidth]{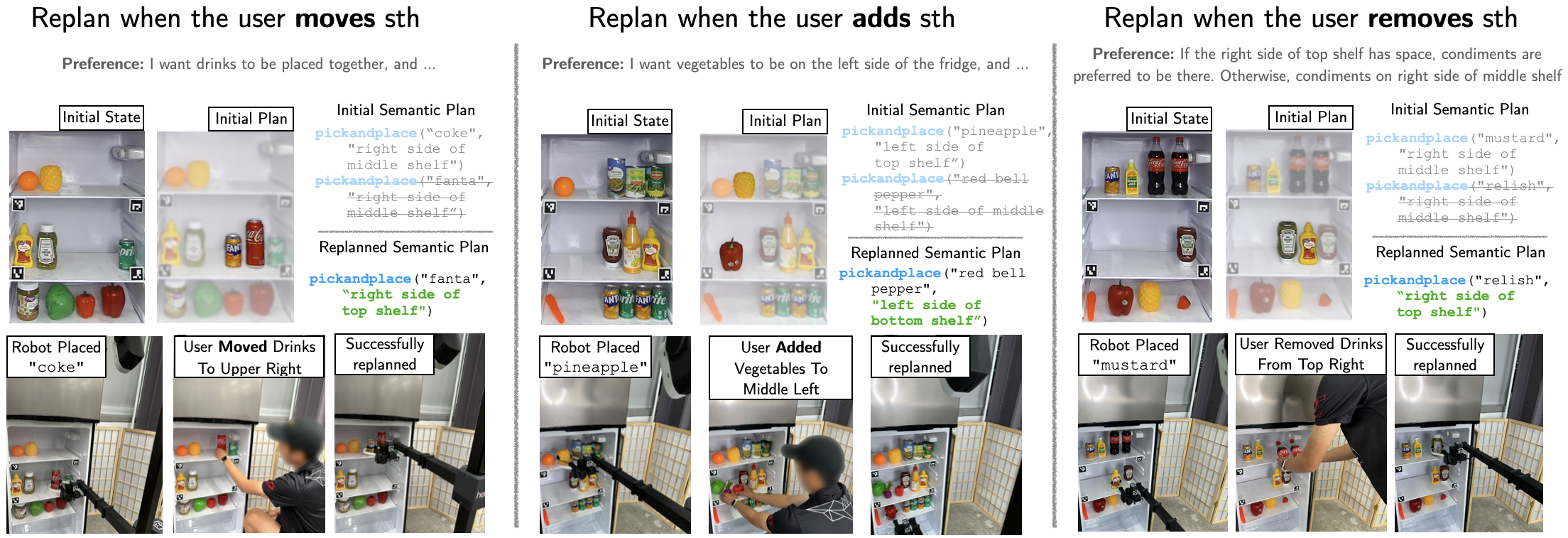}
    \captionsetup{width=\textwidth}
    \caption{Examples of how \apricot{}'s task planner can replan and handle different ways a user could affect and change the environment.}
    \label{fig:all_replanning_ex}
\end{figure}

\subsubsection{\apricot{} Replanning Example\label{app:replan}}
Fig.~\ref{fig:all_replanning_ex} show all 3 examples of how \apricot{}'s task planner is able to replan and handle changes in the environment. 

\textbf{Replanning after additions.} The user’s preference requires vegetables to be placed on the left side of the fridge. The planner initially plans to place the bell pepper on the left side of the middle shelf. However, the user disrupts the scene and adds various vegetables there. \apricot{}’s closed-loop planner adjusts to its plan by placing the bell pepper on the left side of the bottom shelf which still has space.

\textbf{Replanning after removals.} The user's preference requires placing condiments on the right of the top shelf if that space is empty. If that location is full, condiments can be placed on the right of the middle shelf. 
The planner initially plans to place the mustard and relish on the right side of the middle shelf since the right side of the top shelf is filled with coke bottles.
However, the user disrupts the scene and removes the bottles. \apricot{}'s closed-loop planner adjusts to this by placing the relish in the freed space to best satisfy the user's preference.

\subsubsection{Perception System Performance}
To quantify the performance of our perception module, we evaluate it on the 9 real-world scenarios used in the task planner real-robot experiments. 

\textbf{Baselines.} Recall that \apricot{}'s perception module (\gdclip{}) uses Grounding-Dino~\cite{liu2023grounding} to detect the objects in the fridge and CLIP~\cite{radford2021learning} to label the detected object name with an object name.
Compared to this approach, we define the baseline \gd{}, which runs Grounding-DINO, given each object label from the list of all possible grocery items.

\textbf{Metrics.} We define two metrics. The \textbf{percentage of correctly identified object} calculates the average ratio between the detected objects that have the correct object label over the expected number of objects. The \textbf{number of incorrectly detected objects} calculates the average number of incorrect labels, so lower is better. 

\begin{figure}[h]
    \centering
    \includegraphics[width=\linewidth]{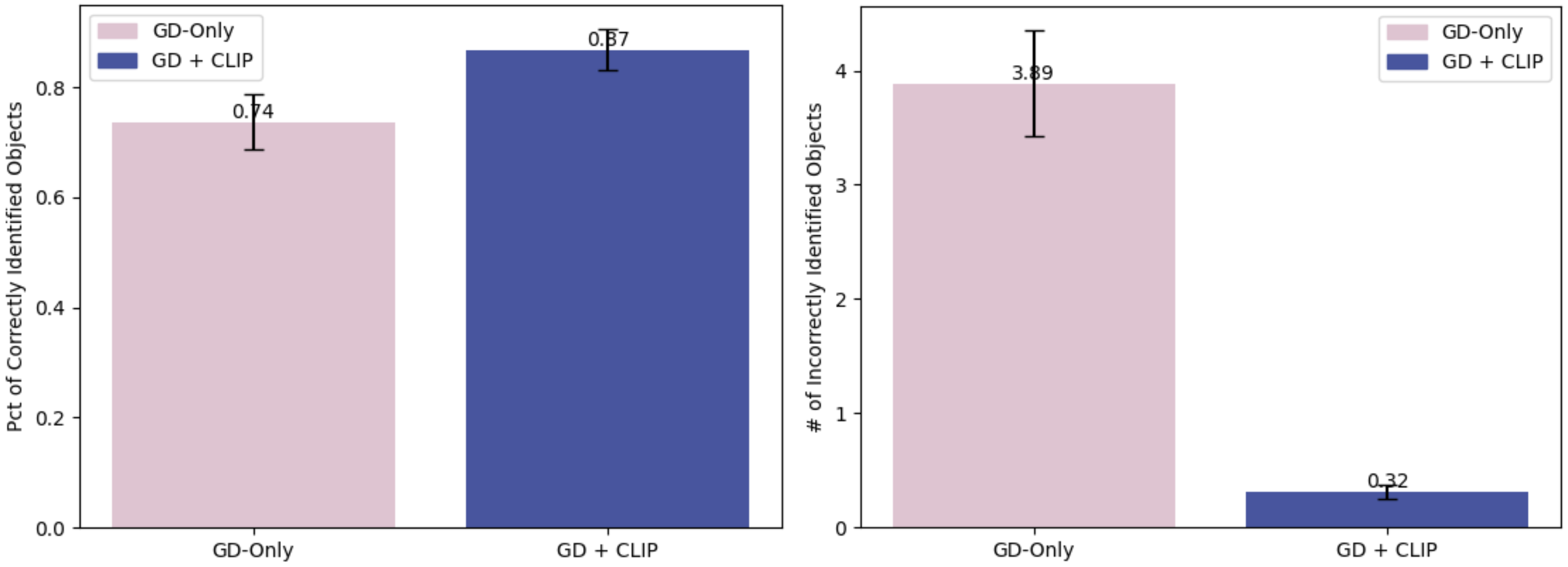}
    \captionsetup{width=\textwidth}
    \caption{Result of perception module on real-robot scenarios. We also report the standard error in the bar plot.}
    \label{fig:perception_result}
\end{figure}

\textbf{Results.} In Fig.~\ref{fig:perception_result}, we observe that, for objects that are actually in the fridge, \gdclip{} is able to assign more correct labels ($87.0\%$) compared to \gd{} ($74.0\%$). In addition, \gd{} detects more objects incorrectly and has much more false positives compared to \gdclip{}. Because \gd{} calls Grounding-DINO for each possible object label, it repeatedly labels the same region in the image with different object labels. This behavior is undesirable because \apricot{}'s task planner relies on the perception system to exactly detect the right number of objects in the fridge and assign each detected object with the correct label. \gd{} would cause the task planner to hallucinate what objects are currently in the fridge.


\section{Prompts}
\subsection{Active Preference Learning Prompts\label{app:apl_prompts}}
The prompts for \apricot{}'s active preference learning modules are the following:
\begin{itemize}
    \item Generate Candidate Preferences Given Demonstrations (Sec.~\ref{app:pmt_gen_pref})
    \item Generate a Plan Given a Preference, an Initial Condition, and a Task (Sec.~\ref{app:tp_prompts})
    \item Calculate the Reward Given a Preference and a Plan (Sec.~\ref{app:pmt_reward})
    \item Generate Candidate Preferences Given a Pair of Preferences (Sec.~\ref{app:pmt_gen_q})
    \item Answer a Question Given a Preference (Sec.~\ref{app:pmt_answer})
\end{itemize}

\subsubsection{Generate Candidate Preferences\label{app:pmt_gen_pref}}
The LLM first analyzes the demonstrations in detail before proposing candidate preferences that can explain user behaviors in the demonstrations. We use \texttt{gpt-4} for these prompts because this is a complex task.
\definecolor{lightgray}{gray}{0.9}
\definecolor{darkgray}{gray}{0.4}
\definecolor{purple}{rgb}{0.58,0,0.82}

\lstdefinelanguage{plan}{
  sensitive=true,
  morecomment=[l]{//},
}

\lstset{
  language=plan,
  basicstyle=\ttfamily,
  keywordstyle=\color{blue},
  commentstyle=\color{darkgray},
  stringstyle=\color{purple},
  showstringspaces=false,
  columns=fullflexible,
  backgroundcolor=\color{lightgray},
  frame=single,
  breaklines=true,
  breakindent=0pt,
  postbreak=\mbox{{$\hookrightarrow$}\space},
  escapeinside={(*@}{@*)}
}

\begin{lstlisting}
--------- System Message --------
You are an assistant who sees someone demonstrating how the fridge is organized and analyzes patterns in the demonstrations so that you can infer the potential preferences that the user might have.
--------- Instruction --------
  # Input
  You are given 2 demonstrations that show the before and after when a set of objects gets put into the fridge. For each demonstration:
  - "Objects that got put away" describes the objects that the user will demonstrate how they would like to put in the fridge.
  - "Initial state of the fridge" describes the objects that are initially in the fridge before the user starts the demonstration. 
  - "Final state of the fridge" describes what the fridge looks like after the demonstration. All the objects in "Objects that got put away" should be in the fridge now. 

  # Goal
  Your goal is to fill out the reasoning worksheet below that analyze the demonstrations and help you infer the preferences that the user could have. 

  The demonstrations are complementary, and they are shown by the user with the same preference. Thus, even though they are different, their difference is due to the difference in the initial state of the fridge and the objects that got put away. They are not contradictory, and there exist preferences that can explain why the 2 demonstrations are the way they are. 

  # Instructions and useful information
  ## Specific locations in the fridge
  The fridge can be segmented into 3 shelves (top shelf, middle shelf, bottom shelf); each shelf has 2 sides (left side, right side). The fridge has 6 specific locations:
  - left side of the top shelf, right side of the top shelf
  - left side of the middle shelf, right side of the middle shelf
  - left side of the bottom shelf, right side of the bottom shelf

  ## General locations in the fridge
  A general location contains multiple specific locations. There are 5 general locations:
    - "left side of fridge" contains: "left side of top shelf", "left side of middle shelf", or "left side of bottom shelf"
    - "right side of fridge" contains: "right side of top shelf", "right side of middle shelf", or "right side of bottom shelf"
    - "top shelf" contains: "left side of top shelf", "right side of top shelf"
    - "middle shelf" contains: "left side of middle shelf", "right side of middle shelf"
    - "bottom shelf" contains: "left side of bottom shelf", "right side of bottom shelf"

  ## Details about the preferences that you need to output
  A preference is a short paragraph that specifies requirements for each category of grocery items. There must be at least one requirement for each category. The type of requirement for each category can be different. The categories are: "Fruits", "Vegetables", "Juice-and-soft-drinks", "Dairy-Products", and "Condiments".  

  The requirement needs to be one of the following:
  - **Type-1. Specific Locations.** These represent that the object must place at this specific location. The options are: 
    - "left side of top shelf"
    - "right side of top shelf"
    - "left side of middle shelf"
    - "right side of middle shelf"
    - "left side of bottom shelf"
    - "right side of bottom shelf".
  - **Type-2. General Locations.** These are vaguer locations that contain multiple specific locations. The options are:
    - "left side of fridge": which means that the user is ok if the object is placed on "left side of top shelf", "left side of middle shelf", or "left side of bottom shelf"
    - "right side of fridge": which means that the user is ok if the object is placed on "right side of top shelf", "right side of middle shelf", or "right side of bottom shelf"
    - "top shelf": which means that the user is ok if the object is either placed on the "left side of top shelf" or "right side of top shelf"
    - "middle shelf": which means that the user is ok if the object is either placed on the "left side of middle shelf" or "right side of middle shelf"
    - "bottom shelf": which means that the user is ok if the object is either placed on the "left side of bottom shelf" or "right side of bottom shelf"
  - **Type-3. Relative Positions.** The options are: 
    - "<category> must be placed together next to existing <category> regardless of which shelf they are on.": This means that the user only cares that a category of objects are placed together next to existing objects of the same type. The user does not care which specific shelf or which side of the shelf the objects are placed on. For example: "fruits must be placed together next to existing fruits regardless which shelf they are on."
    - "<category> must be placed on the same shelf next to <another category of objects>, and which specific shelf does not matter.": This means that the user only cares that a category of objects are placed together on the same shelf next to another category of objects. It does not matter which specific shelf the objects are on. For example: "fruits must be placed on the same shelf next to condiments, and which specific shelf does not matter."

  In addition to giving specific requirements for each category of grocery items, sometimes you may choose to add additional requirements. The options are:
  - **Type-4. Exception For Attribute**
    - "<category> needs to be placed at <specific location 1>, but <attribute of category> needs to be placed at <specific location 2>.": This means that the user has a different specific requirement for object with a certain attribute compared to the main category. An attribute includes a subcategory of the object, the size/weight of the object, a specific feature of the object, etc. For example, "Dairy product needs to be placed at the right side of top shelf, but cheese needs to be placed at left side of middle shelf." Here, "dairy product" is the main category, and "cheese" is the attribute, which is a specific type of dairy product. Another example is, "Fruits needs to be placed at the left side of middle shelf, but big fruits needs to be placed at right side of bottom shelf." Here, "fruits" is the main category, and "big fruit" is the attribute, which is about the size of the fruit.
  - **Type-5. Conditional On Space**
    - "If there are less than <N> objects at <priminary specific location>, I want <category> to be placed at <priminary specific location>. Else, I want <category> to be placed at <second choice specific location>.": This means that there is a maximum number of objects that can be placed at top choice specific location. If that top choice specific location does not have less than N number of objects, the user wants the a category of objects to be placed at the second choice specific location. For example, "if there are less than 3 objects at the right side of top shelf, I want dairy products to be placed at right side of top shelf. Else, I want dairy products to be placed at left side of middle shelf."
  
  
  Below is the reasoning worksheet and output that you must follow. <> contains parts that you must fill out and instructions on how to fill out that part. 

  # Reasoning worksheet
  ## (Fruits) Reasoning about fruits
  ### (Fruits.A) Summarize where fruits are in the demonstrations
  - In demonstration 1:
    - Fruits initially in the fridge: 
      - <You must write down an unordered list of fruits that are initially in the fridge. For each fruit, you must write down the specific locations it occupy, and what other grocery items are next to it>
    - Fruits that got placed in the fridge:
      - <You must write down an unordered list of fruits that are initially in the fridge, and the specific locations they occupy>
  <You must copy the same format and follow the same process for demonstration 2.>

  ### (Fruits.B) When fruits are in the fridge, are they only appearing in one specific location? If that is the case, what is that specific location? 
  <You must look at the summary you wrote in (Fruits.A). You must carefully analyze each fruit's location in verbose detail. If you answer yes, you must write down the specific location that fruits are placed at.>

  ### (Fruits.C) Only if you answered no in (Fruits.B), you should answer this section. When fruits appear at different locations in the fridge, you must group the fruits based on their specific locations they appear at. 
  <Based on the summary you wrote in (Fruits.A), you must write down a json (a dictionary) that map each specific location to fruits at that specific location> 
  ```json
  {
    "demonstration 1": {
      "<specific_location_1>": [<You must write down the fruits at specific_location_1 as a list of strings>], 
      "<specific_location_2>": [<You must write down the fruits at specific_location_2 as a list of strings>], 
      ... <You must do this for each specific location that appeared in your summary in (Fruits.A)>
    },
    <You must do this for each demonstration.>
  }
  ```

  ### (Fruits.D) Potential preferences for fruits
  #### (Fruits.D.1) **Type-1. Specific Locations.**
  <Based on section (Fruits.B), what are some specific locations that the user might like the fruits to be placed at?>

  #### (Fruits.D.2) **Type-2. Group Locations.**
  <Based on the description of group locations explained at **Type-2. General Locations.** and the specific locations in mentioned in (Fruits.A) and (Fruits.C), what are some general locations that the user might like the fruits to be placed at? For example, if the specific locations is "left side of top shelf", there are two possible general locations for fruits: "top shelf" or "left side of the fridge". You must write down your reasoning in verbose detail, and you must specify general locations that fruits might be in.>

  #### (Fruits.D.3) **Type-3. Relative Positions.**
  <Based on the section (Fruits.A), what type of objects are fruits typically placed next to? What kind of objects do you think the user might like the fruits to be placed next to? You must write down your reasoning in verbose detail.>

  #### (Fruits.D.4) **Type-4. Exception For Attribute.**
  <Only if your answer in (Fruits.B) is no, you should proceed to answer this section. Based on the json in (Fruits.C), for each specific location, what kind of attribute can you identify the fruits at that specific location share? You must write down your reasoning in verbose detail.>

  #### (Fruits.D.5) **Type-5. Conditional On Space.**
  <Only if your answer in (Fruits.B) is no, you should proceed to answer this section. Based on the json in (Fruits.C), for each specific location, what kind of conclusion can you make about the number of objects there? What is a primary location that the user likes to put fruits at? What is a secondary location that the user would put fruits at? What is the condition for user to go from putting fruits at the primary location to the secondary location?>
  
  <You must now copy from  ## (Fruits) Reasoning about fruits to #### (Fruits.D.5) **Type-5. Conditional On Space.** and repeat the reasoning process for the rest of the categories: ## (Vegetables) Reasoning about vegetables, ## (Juice-and-soft-drinks) Reasoning about juice and soft drinks, ## (Dairy-Products) Reasoning about dairy products, ## (Condiments) Reasoning about condiments. You must write down the potential preferences for all 5 categories. You cannot omit any category and you must write down detailed reasoning for all of them. >
\end{lstlisting}

\definecolor{lightgray}{gray}{0.9}
\definecolor{darkgray}{gray}{0.4}
\definecolor{purple}{rgb}{0.58,0,0.82}

\lstdefinelanguage{plan}{
  sensitive=true,
  morecomment=[l]{//},
}

\lstset{
  language=plan,
  basicstyle=\ttfamily,
  keywordstyle=\color{blue},
  commentstyle=\color{darkgray},
  stringstyle=\color{purple},
  showstringspaces=false,
  columns=fullflexible,
  backgroundcolor=\color{lightgray},
  frame=single,
  breaklines=true,
  breakindent=0pt,
  postbreak=\mbox{{$\hookrightarrow$}\space},
  escapeinside={(*@}{@*)}
}

\begin{lstlisting}
--------- System Message --------
You are an assistant who sees an analysis of the demonstrations that the user have provided and propose potential preferences that the user might have.
--------- Instruction --------
  # Input
  You are given a reasoning worksheet written in markdown formatthat has already carefully analyzed the demonstrations that the user has provided to represent their personal preference. You must pay close attention to this reasoning worksheet.

  # Goal
  Your goal is to generate <num_preferences> fundamentally different and diverse preferences that are consistent with the reasoning worksheet and explain what the user want. 

  # Instructions and useful information
  ## Specific locations in the fridge
  The fridge can be segmented into 3 shelves (top shelf, middle shelf, bottom shelf); each shelf has 2 sides (left side, right side). The fridge has 6 specific locations:
  - left side of the top shelf, right side of the top shelf
  - left side of the middle shelf, right side of the middle shelf
  - left side of the bottom shelf, right side of the bottom shelf

  ## Details about the preferences that you need to output
  A preference is a short paragraph that specifies requirements for each category of grocery items. There must be at least one requirement for each category present in the demonstrations. The type of requirement for each category can be different. The categories are: "Fruits", "Vegetables", "Juice-and-soft-drinks", "Dairy-Products", and "Condiments".
  If no objects from a given category are present in any of the demonstrations, you should omit that category from your planning. Do not randomly choose a rule. Your preferences should also not re-explain what the categories are or include. 

  The requirement needs to be one of the following:
  - **Type-1. Specific Locations.** These represent that the object must place at this specific location. The options are: 
    - "left side of top shelf"
    - "right side of top shelf"
    - "left side of middle shelf"
    - "right side of middle shelf"
    - "left side of bottom shelf"
    - "right side of bottom shelf".
  - **Type-2. General Locations.** These are vaguer locations that contain multiple specific locations. The options are:
    - "left side of fridge": which means that the user is ok if the object is placed on "left side of top shelf", "left side of middle shelf", or "left side of bottom shelf"
    - "right side of fridge": which means that the user is ok if the object is placed on "right side of top shelf", "right side of middle shelf", or "right side of bottom shelf"
    - "top shelf": which means that the user is ok if the object is either placed on the "left side of top shelf" or "right side of top shelf"
    - "middle shelf": which means that the user is ok if the object is either placed on the "left side of middle shelf" or "right side of middle shelf"
    - "bottom shelf": which means that the user is ok if the object is either placed on the "left side of bottom shelf" or "right side of bottom shelf"
  - **Type-3. Relative Positions.** The options are: 
    - "<category> must be placed together next to existing <category> regardless of which shelf they are on.": This means that the user only cares that a category of objects are placed together next to existing objects of the same type. The user does not care which specific shelf or which side of the shelf the objects are placed on. For example: "fruits must be placed together next to existing fruits regardless which shelf they are on."
    - "<category> must be placed on the same shelf next to <another category of objects>, and which specific shelf does not matter.": This means that the user only cares that a category of objects are placed together on the same shelf next to another category of objects. It does not matter which specific shelf the objects are on. For example: "fruits must be placed on the same shelf next to condiments, and which specific shelf does not matter."

  In addition to giving specific requirements for each category of grocery items, sometimes you may choose to add additional requirements. The options are:
  - **Type-4. Exception For Attribute**
    - "<category> needs to be placed at <specific location 1>, but <attribute of category> needs to be placed at <specific location 2>.": This means that the user has a different specific requirement for object with a certain attribute compared to the main category. An attribute includes a subcategory of the object, the size/weight of the object, a specific feature of the object, etc. For example, "Dairy product needs to be placed at the right side of top shelf, but cheese needs to be placed at left side of middle shelf." Here, "dairy product" is the main category, and "cheese" is the attribute, which is a specific type of dairy product. Another example is, "Fruits needs to be placed at the left side of middle shelf, but big fruits needs to be placed at right side of bottom shelf." Here, "fruits" is the main category, and "big fruit" is the attribute, which is about the size of the fruit.
  - **Type-5. Conditional On Space**
    - "If there are less than <N> objects at <priminary specific location>, I want <category> to be placed at <priminary specific location>. Else, I want <category> to be placed at <second choice specific location>.": This means that there is a maximum number of objects that can be placed at top choice specific location. If that top choice specific location does not have less than N number of objects, the user wants the a category of objects to be placed at the second choice specific location. For example, "if there are less than 3 objects at the right side of top shelf, I want dairy products to be placed at right side of top shelf. Else, I want dairy products to be placed at left side of middle shelf."

  Below is the output that you must follow. <> contains parts that you must fill out and instructions on how to fill out that part. You must only copy and output what is under # Potential Preference Generation Process.

  # Potential Preference Generation Process.
  ## (I) Propose potential preferences based reasoning worksheet
  <You must write <num_preferences> potential preferences that are diverse in the requirements for each category. The potential preferences CANNOT only differ in the wording they have. They must be fundamentally different with different requirements for at least one object. > 
  ### (I.1) Potential preference 1
  #### (I.1.a) What type of requirements are you picking for each category? You must pick different types of requirements compared to previous preferences, so how are types that you picked different from previous ones?
  <For each category, you must pick at least one or more type of requirements based on your answers in (Fruits.D), (Vegetables.D), (Juice-and-soft-drinks.D), (Dairy-Products.D), (Condiments.D). The type of requirements for each category can be the same, but they do not have to be the same for all categories. For each category, you must verbosely explain in detail what is the type of requirement that you picked and what exactly is the preference for that category. If this is the first potential preference that you are writing, you do not need to answer how the types that you picked are different from previous preferences. However, for subsequent potential preferences that you will write, you must make sure that you are picking a different type of requirements for the 5 categories. You must explain in detail how the types that you pick are different from previous preferences. You must cite the previous potential preferences by their number, for example (I.1.b).>

  #### (I.1.b) Preference
  <Based on the types of requirements that you picked in (I.1.a), your goal is to write in a paragraph a potential user preference that is consistent with the demonstrations and the reasoning worksheet. You must write in natural langauge and there must be at least one requirement for each category. You must not make reference to other potetial preferences or reasoning that you have written. Each preference be understandable on their own without referring to another potential preference. >

  <You must now copy from ### (I.1) Potential preference 1 and repeat the writing process for the remaining <num_preferences_minus_one> potential preferences. You must write <num_preferences> potential preferences in total.>

  # (II) Final valid JSON output
  You must copy the <num_preferences> diverse potential preferences that you wrote in (II) as a list of strings in a valid json format. You must only copy the preference, which must not refer to other preferences or parts of the writing process. Each preference must define requirements for all the categories, and it must be understanable and interpretable on its own without referring to other potential preferences or other writing that you have done before. Each preference should You must make sure that when writing each preference, you do not use single quotes or double quotes or citations.
  ```json
  [
      "<You must copy the first potential preference in (I.1.b) here. For subsequent potential preferences, you must not make reference to previous potential preferences.>",
      <You must copy the remaining <num_preferences_minus_one> preferences here>
  ]
  ```
\end{lstlisting}

\subsubsection{Calculate the Reward\label{app:pmt_reward}}
The LLM is prompted to output a json of yes/no on whether an object's semantic placement location in a plan $\xi$ satisfies a given preference $\theta$. 
The reward is calculated by the ratio between the sum of the responses (yes is 1 and no is 0) and the total number of objects in the plan. 
We use \texttt{gpt-4o} for this prompt.

\definecolor{lightgray}{gray}{0.9}
\definecolor{darkgray}{gray}{0.4}
\definecolor{purple}{rgb}{0.58,0,0.82}

\lstdefinelanguage{plan}{
  sensitive=true,
  morecomment=[l]{//},
}

\lstset{
  language=plan,
  basicstyle=\ttfamily,
  keywordstyle=\color{blue},
  commentstyle=\color{darkgray},
  stringstyle=\color{purple},
  showstringspaces=false,
  columns=fullflexible,
  backgroundcolor=\color{lightgray},
  frame=single,
  breaklines=true,
  breakindent=0pt,
  postbreak=\mbox{{$\hookrightarrow$}\space},
  escapeinside={(*@}{@*)}
}

\begin{lstlisting}
--------- System Message --------
You are roleplaying as a user with a specific preference. Your goal is to answer whether or not an object placement has satisfied your preference and the initial configuration of the fridge. 
--------- Instruction --------
  You are roleplaying as a user with a specific preference. You are shown the objects that are already initially in the fridge. The fridge has different shelves, which are divided into smaller sub-sections. The objects in a specific section are listed from left to right. 
  
  You are also shown a plan to place an object at a specific part of the fridge. Your goal is to answer "yes" or "no" on whether the object placement has satisfied your preference. You must also pay attention to objects that are already initially in the fridge because where you want objects placed could depend that initial condition. 

  You must respect the following rules when you make your decision:
  - You do not need to worry about the amount of space in the fridge. You must not make your decision based on you judging if a shelf is already full. 
  - If the preference for a category is a specific location, you must only check if the object is being placed at that specific location. You must ignore other objects that are at that specific location.
    * For example, consider when your preference is to put drinks together regardless of the shelf and to put fruits on the left side of top shelf, and the plan has placed a drink on the left side of the top shelf then a fruit also on the left side of top shelf. When you answer whether the plan of placing the fruit on the left side of top shelf, you should answer "yes" because it satisfies the specific placement location for fruit.
  - If the object placement plan is more general than your preference, that plan does not satisfy your preference, and you must reply "no". 
    * For example, if your preference is to put fruits on the left side of the middle shelf, and the plan tries to put a fruit on the middle shelf, the plan is too general, and it will not always satisfy your preference. Thus, you must reply "no".
  - When your preference has conditionals, you must carefully check if that conditional is true first when you make your decision.
    * For example, if your preference wants vegetables to be at location 2 if there are already N objects at location 1, you must check the objects that are already intially in the fridge and make sure that the condition (N objects at location 1) is true. 
  - When your preference depends on the initial condition of the fridge, you must carefully check what is desirable placement location based on the inital condition of the fridge.
    * For example, if your preference wants vegetables to be placed together, you must check if there are already vegetables in the fridge. If there are already vegetables at the right side of top shelf, the object placement plan should also place new objects at the right side of top shelf.

  The preference, the objects already initially in the fridge, the object placement plan that you receive will be in markdown format:
  # Your preference
  ... Your preference will be stated here ...

  # Objects already initially in the fridge
  ```
  ... The objects already initially in the fridge will be stated here as an json ...
  ```

  # Object placement plan
  ```
  ... The plan will be stated here as a python code: pickandplace(object_to_place, placement_location)
  ```

  You must reply in a valid json format, each item corresponds to one of the pickandplace() function in the object placement plan. You must only use double quote. You cannot use apostrophy or single quote. See below for the format that you must follow. 
  ```json
  [
    {
        "Reasoning": "{You must not use any more of quotation mark inside your resoning. You must not write apostrophe, double quote, or single quote. You must put your reasoning on whether a user with your preference would be happy with the object placement given the objects already initially in the fridge.}",
        "Does this plan satisfy your preference (yes/no)": "{you must only reply yes or no}"
    }, ...<there should be one dictionary per line of pickandplace()> 
  ]
  ```
--------- Example Input --------
    # Your preference
    I like putting dairy on the top shelf and vegetables on the left side of the bottom shelf.

    # Objects already initially in the fridge
    ```
    {
        "top shelf":
            {
                "left side of top shelf": [],
                "right side of top shelf": ["cheese", "yogurt"]
            },
        "middle shelf":
            {
                "left side of middle shelf": [],
                "right side of middle shelf": []
            },
        "bottom shelf":
            {
                "left side of bottom shelf": ["carrot"],
                "right side of bottom shelf": []
            }
    }
    ```

    # Object placement plan
    ```
    pickandplace("oat milk", "right side of top shelf")
    pickandplace("whole milk", "right side of middle shelf")
    ```
--------- Example Response --------
    ```json
    [
        {
            "Reasoning": "Oat milk is a dairy product. I prefer dairy on the top shelf. The plan was to put oat milk on the right side of top shelf, which falls under top shelf, so the plan satisfies my preference.",
            "Does this plan satisfy your preference (yes/no)": "yes"
        },
        {
            "Reasoning": "Whole milk is a dairy product. I prefer dairy on the top shelf. The plan was to put whole milk on the right side of middle shelf, which is not top shelf, so the plan does not satisfy my preference.",
            "Does this plan satisfy your preference (yes/no)": "no"
        }
    ]
    ```
--------- Example Input --------
    # Your preference
    I like fruits on the left side of the middle shelf, and vegetables on the right side of the middle shelf.

    # Objects already initially in the fridge
    ```
    {
        "top shelf":
            {
                "left side of top shelf": ["whole milk"],
                "right side of top shelf": []
            },
        "middle shelf":
            {
                "left side of middle shelf": [],
                "right side of middle shelf": []
            },
        "bottom shelf":
            {
                "left side of bottom shelf": [],
                "right side of bottom shelf": []
            }
    }
    ```

    # Object placement plan
    ```
    pickandplace("apple", "middle shelf") 
    ```
--------- Example Response --------
    ```json
    [
        {
            "Reasoning": "Apple is a fruit. I prefer fruits on the left side of the middle shelf. The plan is placing apple in a more general location: middle shelf. This means that it could potentially place apple not specifically at the left side of the middle shelf, so the plan does not satisfy my preference.",
            "Does this plan satisfy your preference (yes/no)": "no"
        }
    ]
    ```
--------- Example Input --------
    # Your preference
    I prefer vegetables be placed together. I also want fruits on the middle shelf.

    # Objects already initially in the fridge
    ```
    {
        "top shelf":
            {
                "left side of top shelf": [],
                "right side of top shelf": []
            },
        "middle shelf":
            {
                "left side of middle shelf": [],
                "right side of middle shelf": ["cucumber", "carrot"]
            },
        "bottom shelf":
            {
                "left side of bottom shelf": [],
                "right side of bottom shelf": []
            }
    }
    ```

    # Object placement plan
    ```
    pickandplace("spinach", "right side of top shelf") 
    ```
--------- Example Response --------
    ```json
    [
        {
            "Reasoning": "Spinach is a vegetable. I want vegetables to be placed together, so I need to check if there are vegetables initially in the fridge. There are vegetables (cucumber, carrot) at the right side of middle shelf, so new vegetables should also get placed at the right side of middle shelf. However, the plan decides to place spinach to the right side of the top shelf, which is not right side of middle shelf. Thus, this plan does not satisfy my preference.",
            "Does this plan satisfy your preference (yes/no)": "no"
        }
    ]
    ```
--------- Example Input --------
    # Your preference
    If there are less than 2 dairy product on the right side of top shelf, you can put dairy products on the right side of top shelf; else, I want the rest of the dairy product to be on the right side of middle shelf.

    # Objects already initially in the fridge
    ```
    {
        "top shelf":
            {
                "left side of top shelf": [],
                "right side of top shelf": ["oat milk"]
            },
        "middle shelf":
            {
                "left side of middle shelf": [],
                "right side of middle shelf": []
            },
        "bottom shelf":
            {
                "left side of bottom shelf": [],
                "right side of bottom shelf": []
            }
    }
    ```

    # Object placement plan
    ```
    pickandplace("whole milk", "right side of middle shelf") 
    ```
--------- Example Response --------
    ```json
    [
        {
            "Reasoning": "Whole milk is a dairy product. My preference has a condition: whether there are less than 2 dairy products on the right side of top shelf, so I must pay close attention to the objects initially in the fridge. There is 1 dairy product (oat milk), so the condition (less than 2 dairy product on the right side of top shelf) is true. My preference would want whole milk also on the right side of top shelf. However, the plan put whole milk on right side of middle shelf, so it does not satisfy my preference.",
            "Does this plan satisfy your preference (yes/no)": "no"
        }
    ]
    ```
--------- Example Input --------
    # Your preference
    If there are less than 2 dairy product on the right side of top shelf, you can put dairy products on the right side of top shelf; else, I want the rest of the dairy product to be on the right side of middle shelf.

    # Objects already initially in the fridge
    ```
    {
        "top shelf":
            {
                "left side of top shelf": [],
                "right side of top shelf": ["oat milk", "whole milk"]
            },
        "middle shelf":
            {
                "left side of middle shelf": [],
                "right side of middle shelf": []
            },
        "bottom shelf":
            {
                "left side of bottom shelf": [],
                "right side of bottom shelf": []
            }
    }
    ```

    # Object placement plan
    ```
    pickandplace("cheese", "right side of top shelf") 
    ```
--------- Example Response --------
    ```json
    [
        {  
            "Reasoning": "Cheese is a dairy product. My preference has a condition: whether there are less than 2 dairy products on the right side of top shelf, so I must pay close attention to the objects initially in the fridge. There are 2 dairy product (oat milk, whole milk), so the condition (less than 2 dairy product on the right side of top shelf) is false. I should look at the else case of my prefernece, which wants the cheese to be on the right side of middle shelf. However, the plan put cheese on right side of top shelf, so it does not satisfy my preference.",
            "Does this plan satisfy your preference (yes/no)": "no"
        }
    ]
    ```
--------- Example Input --------
    # Your preference
    I prefer vegetables be placed on the right side of middle shelf. I want fruits on the right side of top shelf, condiments on the left side of top shelf.

    # Objects already initially in the fridge
    ```
    {
        "top shelf":
            {
                "left side of top shelf": [],
                "right side of top shelf": []
            },
        "middle shelf":
            {
                "left side of middle shelf": [],
                "right side of middle shelf": ["cucumber", "carrot", "corn"]
            },
        "bottom shelf":
            {
                "left side of bottom shelf": [],
                "right side of bottom shelf": []
            }
    }
    ```

    # Object placement plan
    ```
    pickandplace("spinach", "right side of bottom shelf") 
    ```
--------- Example Response --------
    ```json
    [
        {
            "Reasoning": "Spinach is a vegetable. I prefer vegetables on the right side of middle shelf. The plan is placing spinach at the wrong location: right side of bottom shelf. Thus, this plan does not satisfy my preference.",
            "Does this plan satisfy your preference (yes/no)": "no"
        }
    ]
    ```
--------- Example Input --------
    # Your preference
    I prefer vegetables be placed together next to exsiting vegetables regardless of which shelf they are on. I want fruits on the right side of top shelf.

    # Objects already initially in the fridge
    ```
    {
        "top shelf":
            {
                "left side of top shelf": [],
                "right side of top shelf": ["cucumber"]
            },
        "middle shelf":
            {
                "left side of middle shelf": [],
                "right side of middle shelf": []
            },
        "bottom shelf":
            {
                "left side of bottom shelf": [],
                "right side of bottom shelf": []
            }
    }
    ```

    # Object placement plan
    ```
    pickandplace("spinach", "right side of top shelf") 
    pickandplace("apple", "right side of top shelf") 
    ```
--------- Example Response --------
    ```json
    [
        {
            "Reasoning": "Spinach is a vegetable. I want vegetables to be placed together, so I need to check if there are vegetables initially in the fridge. There are vegetables (cucumber) at the right side of top shelf, so new vegetables should also get placed at the right side of top shelf. Since the plan put spinach at the top shelf, it has satisfied my preference.",
            "Does this plan satisfy your preference (yes/no)": "yes"
        },
        {
            "Reasoning": "Apple is a fruit. I prefer fruits on the right side of top shelf. The plan is placing apple at the right side of top shelf, which matches the requirement and satisfies my preference.",
            "Does this plan satisfy your preference (yes/no)": "yes"
        }
    ]
    ```
\end{lstlisting}

\subsubsection{Generate Candidate Questions\label{app:pmt_gen_q}}
The LLM receives as input a pair of preferences, the corresponding plans for these preferences, the reward of each plan given these preferences, the initial condition, and the tasks that these plans are generated to solve.
We use \texttt{gpt-4o} for this prompt.

\definecolor{lightgray}{gray}{0.9}
\definecolor{darkgray}{gray}{0.4}
\definecolor{purple}{rgb}{0.58,0,0.82}

\lstdefinelanguage{plan}{
  sensitive=true,
  morecomment=[l]{//},
}

\lstset{
  language=plan,
  basicstyle=\ttfamily,
  keywordstyle=\color{blue},
  commentstyle=\color{darkgray},
  stringstyle=\color{purple},
  showstringspaces=false,
  columns=fullflexible,
  backgroundcolor=\color{lightgray},
  frame=single,
  breaklines=true,
  breakindent=0pt,
  postbreak=\mbox{{$\hookrightarrow$}\space},
  escapeinside={(*@}{@*)}
}

\begin{lstlisting}
--------- System Message --------
You are an assistant that ask informative yes-or-no questions to try to differentiate between two potential preference candidates.
--------- Instruction --------
  # Goal
  You are trying to learn what is a user's personal preference. You receivea new initial state that you are trying to solve, two possible candidates of the user's preference, the corresponding task plan generated based on the candidate preferences.

  Your goal is to analyze the differences between the two candidates and their plans and ask <num_questions> informative, effective yes-or-no questions such that the user's answers to your questions can reveal which candidate is closer to the user's preference.

  # Input
  You will receive the input in the following format:
  # New Initial Condition To Solve
  ## Objects initially in the fridge
  ```json
  ... The objects already initially in the fridge will be stated here as an json ...
  ```

  ## Objects that must be put away into the fridge
  ```json
  ... The objects that need to be put away into the fridge will be stated here as a list ...
  ```

  # Preference Candidate 1
  ... The first candidate will be stated here ...

  ## Plan 1 based on Preference Candidate 1
  ... The plan will put away all the objects that need to be put away ...

  ### Preference Candidate 1's Score on Plan 1
  ... Numerical score here. The higher a score, the better Plan 1 is at satisfying the preference candidate 1...
  ### Preference Candidate 2's Score on Plan 1
  ... Numerical score here. The higher a score, the better Plan 1 is at satisfying the preference candidate 2 ...

  # Preference Candidate 2
  ... The second candidate will be stated here ...

  ## Plan 2 based on Preference Candidate 2
  ... The plan will put away all the objects that need to be put away ...

  ### Preference Candidate 1's Score on Plan 2
  ... Numerical score here. The higher a score, the better Plan 2 is at satisfying the preference candidate 1...
  ### Preference Candidate 2's Score on Plan 2
  ... Numerical score here. The higher a score, the better Plan 2 is at satisfying the preference candidate 2 ...

  You must follow these rules when coming up with questions to ask:
  - You must analyze the difference in the candidate preferences and their plans. The differences will inform you on what question to ask. The scores can inform you how much each plan is liked by each preference candidate. The scores can help guide you to understand the main difference in the candidate preferences.
  - You must ask questions about the preferences. You must never ask question about that mention the new initial condition to solve or the plan that got generated.
  - Your questions must just be about what the user prefer. They must be standalone so that the user can just look at the question and be able to answer that question.
  - You must ask effective, useful questions where user's answer will help you figure out which preference candidate is better.
  - You must ask simple yes or no questions. 
  - Your questions cannot ask if the user prefers A or B. However, you can ask if the user prefers A rather than B.
  - Your questions should try to use wording or terminology that are also used in the candidate preferences.
  - Your question must be about specifc categories or attributes under a specific category. The categories are for example: ["fruits", "vegetables", "dairy product", "condiments", "juice and soft drinks"]. Some examples of attributes are: "heavy fruits", which is an attribute under "fruits"; "sweet conditments", which is an attribute under "condiments"; "soft drinks", which is an attribute under "juice and soft drinks".
  - Your questions must not be about overall objects or items. You must never ask questions like "Do you prefer to items to be ...?" or "Do you prefer items of the same category ...?" 
  - Your questions must not be asking about specific objects. You must never ask questions like "Do you prefer chocolate milk to be placed at ...?" Instead, you can ask questions about the specific category (e.g. "Do you prefer dairy products to be placed at ...?" or specific attribute (e.g. "Do you prefer milk to be placed at", where "milk" is the specific attribute.)
  
  You must reply in this format:
  # Reasoning
  ... You should analyze the differences between preference candidates and their plans. You should strategize what questions you would ask here ...

  # Questions
  ... You must put your <num_questions> questions as an unordered list here ...
\end{lstlisting}

\subsubsection{Answering a Question Given a Preference\label{app:pmt_answer}}
The LLM is asked to roleplay a user who has a preference $\theta$ answering the question $q$.
We use \texttt{gpt-4o} for this prompt.
This prompt is used to:
\begin{itemize}
    \item Estimate the likelihood $P(o | q, \theta)$: We use OpenAI's API, which outputs the log-likelihood of outputting a specific token, to extract the log-likelihood of the LLM outputting ``yes" or ``no".
    \item Mimic the user's answer with ground-truth preference: When the interactive approaches need to query the user, the LLM is supplied with the ground-truth preference (not known to the approaches) to answer the question that an approach asks. 
\end{itemize}

\definecolor{lightgray}{gray}{0.9}
\definecolor{darkgray}{gray}{0.4}
\definecolor{purple}{rgb}{0.58,0,0.82}

\lstdefinelanguage{plan}{
  sensitive=true,
  morecomment=[l]{//},
}

\lstset{
  language=plan,
  basicstyle=\ttfamily,
  keywordstyle=\color{blue},
  commentstyle=\color{darkgray},
  stringstyle=\color{purple},
  showstringspaces=false,
  columns=fullflexible,
  backgroundcolor=\color{lightgray},
  frame=single,
  breaklines=true,
  breakindent=0pt,
  postbreak=\mbox{{$\hookrightarrow$}\space},
  escapeinside={(*@}{@*)}
}

\begin{lstlisting}
--------- System Message --------
You are roleplaying as a user with a specific preference. Your goal is to answer yes-or-no question based on the preference of the user whom you are roleplaying. When you put your answer in the json, you must only write "yes" or "no"
--------- Instruction --------
  You are roleplaying as a user with a specific preference. You are asked one yes-or-no question, and you goal is answer "yes" or "no" based on your preference.

  The preference and the questions that you receive will be in markdown format:
  # Your preference
  ... Your preference will be stated here ...

  # Question for you to answer
  ... The question will be stated here as an unordered list ...

  You must follow these rules when answering:
  - When you write down your reasoning, you must not use apostrophy, single quote, or double quote. 
  - You must only answer "yes" or "no". Even if it does not apply to your preference or you are not sure, you must answer "yes" or "no". You cannot answer anything else. You must not write "N/A" or "na" or "n/a". You must only output "yes" or "no".
  - If the question is asking whether the user prefers a specific thing, but the user's preference is more general, the user will likely respond no because they do not care about the specific thing and they only care about the general requirement. 
  - If the question is asking whether the user prefers a general thing, but the user's preference is more specific, the user will likely respond no because the general thing can include cases that violate the user's specific requirement. 

  You must reply in a json, which includes your reasoning process and answer to a question. You must follow this format:
  {
    "Reasoning": "{put your reasoning on how a user with your preference would answer to this question. You must not use single quote or double quote inside your reasoning. }",
    "Answer (yes/no)": "{you must only reply yes or no}"
  }
--------- Example Input --------
    # Your preference
    Fruits should be placed on left side of the fridge.

    # Questions for you to answer
    - Do you prefer fruits to be placed on the left side of the top shelf?
--------- Example Response --------
    {
      "Reasoning": "The user prefers fruits to be on the left side of the fridge, which is a general requirement. The user does not specifically perfer fruits to be on the left side of the top shelf, so the user would answer no.",
      "Answer (yes/no)": "no"
    }
--------- Example Input --------
    # Your preference
    Fruits should be placed on left side of the fridge.

    # Questions for you to answer
    - Is it important to you to have fruits specifically on the left side of the middle shelf rather than just on the right left of the fridge?
--------- Example Response --------
    {
      "Reasoning": "The user prefers fruits to be on the left side of the fridge, which is a general requirement and the user will not care about the specific shelf that fruits are placed on as long as it is the left side of the fridge. Thus, it is not important for fruits to be on the left side of the middle shelf, and the user would answer no.",
      "Answer (yes/no)": "no"
    }
--------- Example Input --------
    # Your preference
    Fruits should be placed on left side of the top shelf.

    # Questions for you to answer
    - Do you prefer fruits to be placed on the left side of the fridge?
--------- Example Response --------
    {
      "Reasoning": "The user prefers that fruits are on the left side of the top shelf, which is specific. Left side of the fridge includes left side of the middle shelf and left side of the bottom shelf, which does not match preference of the user. The user would answer no.",
      "Answer (yes/no)": "no"
    }
--------- Example Input --------
    # Your preference
    Fruits should be placed on the left side of the middle or bottom shelf. 

    # Questions for you to answer
    - Do you prefer to place fruits next to other fruits already in the fridge?
--------- Example Response --------
    {
      "Reasoning": "The user prefers that fruits are placed at specific locations (left side of the middle or bottom shelf). The question is more explicitly asking if the user prefers fruits to be placed next to other fruits already in the fridge, which does not have specfic locations requirement, so the user would say no.",
      "Answer (yes/no)": "no"
    }
--------- Example Input --------
    # Your preference
    Fruits should be placed on the left side of the middle or bottom shelf. 

    # Questions for you to answer
    - Do you prefer to have similar items grouped together on the same shelf?
--------- Example Response --------
    {
      "Reasoning": "The user prefers that fruits are placed at specific locations (left side of the middle or bottom shelf). The question is implying that the user only cares about similar items being on the same shelf without specific requirements on which shelf and which side of the shelf, so the user would answer no.",
      "Answer (yes/no)": "no"
    }
--------- Example Input --------
    # Your preference
    I want fruits to be placed together next to fruits that are already in the fridge.

    # Questions for you to answer
    - Do you prefer to place fruits next to other fruits already in the fridge?
--------- Example Response --------
    {
      "Reasoning": "The user prefers that fruits are placed together next to fruits that are already in the fridge. The question also asks if the user prefers to place fruits next to other fruits already in the fridge, so the answer is yes.",
      "Answer (yes/no)": "yes"
    }
--------- Example Input --------
    # Your preference
    Fruits should be placed on the left side of the middle shelf. 

    # Questions for you to answer
    - Do you prefer if fruits are placed specifically on the middle shelf? 
--------- Example Response --------
    {
      "Reasoning": "The user prefers that fruits are placed at specific locations (left side of the middle shelf). The question is asking about a more general requirement because middle shelf is more general than the left side of middle shelf. Based on the rules above, the general location contains locations that violate the user's preference (right side of the middle shelf), so answer is no.",
      "Answer (yes/no)": "no"
    }
--------- Example Input --------
    # Your preference
    Fruits should be placed on the left side of the middle shelf. Vegetables should be on the right side of middle shelf.

    # Questions for you to answer
    - Do you prefer to have items of the same category placed together on the same shelf?
--------- Example Response --------
    {
      "Reasoning": "The user prefers that fruits are placed at specific locations (left side of the middle shelf) and vegetables are also placed at specific locations (right side of the middle shelf). The question is asking if there is any category of objects that needs to be placed together on the same shelf. Since the requirements for fruits is about a specific location instead of putting fruits together, and the requirements for vegetables is also about a specific location instead of putting vegetables togetheer, the answer to this question is no",
      "Answer (yes/no)": "no"
    }
--------- Example Input --------
    # Your preference
    Fruits should be placed on the left side of the middle shelf. Condiments on the right side of the top shelf. Vegetables should be placed together next to other existing vegetables. 

    # Questions for you to answer
    - Do you prefer to have items of the same category placed together on the same shelf?
--------- Example Response --------
    {
      "Reasoning": "The user prefers that fruits are placed at specific locations (left side of the middle shelf), condiments are placed at specific locations (right side of the top shelf), and vegetables need to be placed together. The question asks if there is any category of objects that needs to be placed together on the same shelf. In this preference, vegetables need to be placed together on the same shelf. Since there exists one category that are placed together on the same shelf, the answer to this question is yes",
      "Answer (yes/no)": "yes"
    }
\end{lstlisting}

\subsection{Task Planner Prompts\label{app:tp_prompts}}
The LLM prompt will generate a semantic plan that satisfies the given user preferences. 
When it receives feedback, it will reflect on its plan and revise its original plan. 
We use \texttt{gpt-4-turbo} because we empirically find that \texttt{gpt-4o} tends to make more mistakes (typos) when generating the code.

\definecolor{lightgray}{gray}{0.9}
\definecolor{darkgray}{gray}{0.4}
\definecolor{purple}{rgb}{0.58,0,0.82}

\lstdefinelanguage{plan}{
  sensitive=true,
  morecomment=[l]{//},
}

\lstset{
  language=plan,
  basicstyle=\ttfamily,
  keywordstyle=\color{blue},
  commentstyle=\color{darkgray},
  stringstyle=\color{purple},
  showstringspaces=false,
  columns=fullflexible,
  backgroundcolor=\color{lightgray},
  frame=single,
  breaklines=true,
  breakindent=0pt,
  postbreak=\mbox{{$\hookrightarrow$}\space},
  escapeinside={(*@}{@*)}
}

\begin{lstlisting}
--------- System Message --------
You are an assistant that comes up with a plan for putting items into a fridge given a list of items and a human's preference.
--------- Instruction --------
  You must analyze a human's preferences and then come up with a plan to put items into a fridge.
  
  You will receive the following as input:
  Optional[Feedback]: ...
  Objects: ...
  Locations: ...
  Initial State: ...
  Preference: ...

  where
    - "Feedback" appears if the previous plan was geometrically infeasible.
      - It will contain the items that did not fit in the previous plan.
      - It is possible that the item could fit but the pickandplace function call was incorrect (e.g. misspelled item or location)
    - "Objects" is a list of items that need to be placed in the fridge.
    - "Locations" is a list of locations in the fridge.
    - "Initial State" is a dictionary whose keys are the locations in the fridge and the values are a list of items currently in that location.
    - "Preference" is a description of the human's preferences for where they like things in a fridge.
      - The preference should always be satisfied, even when attempting to place items that did not fit in the previous plan.
  
  You must respond in the following format:
  # Reflect: ...
  # Reasoning: ...
  pickandplace(item1, location1)
  pickandplace(item2, location2)
  ...

  where
    - "Reflect" should contain reasoning about the previous plan if it was geometrically infeasible.
      - Reflect on what went wrong and how you plan to fix it.
      - The plan must abide by the human's preference.
    - "Reasoning" should contain the reasoning for your plan.
      - Reason about how best to place the items in the fridge based on the human's preference.
      - If the reflection involves repositioning items, ensure that the human's preference is still satisfied.
    - "pickandplace(item, location)" is a function call that places the item in the location in the fridge.
      - This is your plan of action.
  
  Each time you are prompted to generate a new plan, the fridge is reset to its initial state. Use this
  to your advantage to come up with a better plan. It is absolutely necessary to satisfy the human's
  preference; geometric infeasibility only suggests that objects should be placed in different locations
  that still satisfy the preference.
--------- Example Input --------
    Objects: ["milk", "cheese", "apple", "orange"]
    Locations: ["top shelf", "left side of top shelf", "right side of top shelf", "middle shelf", "left side of middle shelf", "right side of middle shelf", "bottom shelf", "left side of bottom shelf", "right side of bottom shelf"]
    Initial State: {}
    Preference: "I like putting dairy on the top shelf and fruits on the right side of middle shelf."
--------- Example Response --------
    # Reasoning: Milk and cheese are dairy product, which based on the preference needs to be on the top shelf. Apples and oranges are fruits, which needs to be on the right side of middle shelf.
    pickandplace("milk", "top shelf")
    pickandplace("cheese", "top shelf")
    pickandplace("apple", "right side of middle shelf")
    pickandplace("orange","right side of middle shelf")
--------- Example Input --------
    Objects: ["milk", "cheese", "apple", "orange"]
    Locations: ["top shelf", "left side of top shelf", "right side of top shelf", "middle shelf", "left side of middle shelf", "right side of middle shelf", "bottom shelf", "left side of bottom shelf", "right side of bottom shelf"]
    Initial State: {"left side of middle shelf": ["peach", "cherries"]}
    Preference: "Fruits must be placed next to other exisiting fruits regardless of which shelf they are on. Dairy products on the be the right side of bottom shelf. "
--------- Example Response --------
    # Reasoning: Apples and oranges are fruits, which needs to be placed next to other exisiting fruits. Existing fruits in the fridge are "peach" and "cherries", which are at the left side of middle shelf, so apple and oranges should be placed there as well. Milk and cheese are diary product, which needs to be placed on the right side of bottom shelf. 
    pickandplace("apple", "left side of middle shelf")
    pickandplace("orange","left side of middle shelf")
    pickandplace("milk", "right side of bottom shelf")
    pickandplace("cheese", "right side of bottom shelf")
--------- Example Input --------
    Objects: ["milk", "cheese", "apple", "orange"]
    Locations: ["top shelf", "left side of top shelf", "right side of top shelf", "middle shelf", "left side of middle shelf", "right side of middle shelf", "bottom shelf", "left side of bottom shelf", "right side of bottom shelf"]
    Initial State: {"right side of top shelf": ["cabbage", "corn"]}
    Preference: "Fruits must be placed next to other exisiting vegetables regardless of which shelf they are on. Dairy products on the be the right side of bottom shelf. "
--------- Example Response --------
    # Reasoning: Apples and oranges are fruits, which needs to be placed next to other exisiting vegetables. Existing vegetables in the fridge are "cabbage" and "corn", which are at the right side of top shelf, so apple and oranges should be placed there as well. Milk and cheese are diary product, which needs to be placed on the right side of bottom shelf. 
    pickandplace("apple", "right side of top shelf")
    pickandplace("orange","right side of top shelf")
    pickandplace("milk", "right side of bottom shelf")
    pickandplace("cheese", "right side of bottom shelf")
--------- Example Input --------
    Objects: ["milk", "cheese", "apple", "orange"]
    Locations: ["top shelf", "left side of top shelf", "right side of top shelf", "middle shelf", "left side of middle shelf", "right side of middle shelf", "bottom shelf", "left side of bottom shelf", "right side of bottom shelf"]
    Initial State: {"left side of middle shelf": ["peach", "cherries"]}
    Preference: "Most dairy product should be placed on the left side of top shelf, but cheese product should be placed on the right side of middle shelf. Fruits should be placed on the left side of middle shelf."
--------- Example Response --------
    # Reasoning: Cheese is a cheese product, so it needs to be placed at right side of middle shelf. Milk is a dairy product, so it should be placed on the left side of top shelf. Apples and oranges are fruits, which needs to be on the left side of middle shelf.
    pickandplace("cheese", "right side of middle shelf")
    pickandplace("milk", "left side of top shelf")
    pickandplace("apple", "left side of middle shelf")
    pickandplace("orange","left side of middle shelf")
--------- Example Input --------
    Objects: ["milk", "cheese", "apple", "orange"]
    Locations: ["top shelf", "left side of top shelf", "right side of top shelf", "middle shelf", "left side of middle shelf", "right side of middle shelf", "bottom shelf", "left side of bottom shelf", "right side of bottom shelf"]
    Initial State: {"right side of bottom shelf": ["yogurt", "butter"]}
    Preference: "If the right side of bottom shelf has less than 3 items, dairy products can be placed there. Else, you must place them at the left side of top shelf. Fruits should be placed on the left side of middle shelf."
--------- Example Response --------
    # Reasoning: Right side of bottom shelf can fit one more dairy product, so I will put milk there. Since now there are 3 items at the right side of bottom shelf, I must put cheese on the left side of top shelf. Apples and oranges are fruits, which needs to be on the left side of middle shelf.
    pickandplace("milk", "right side of bottom shelf")
    pickandplace("cheese", "left side of top shelf")
    pickandplace("apple", "left side of middle shelf")
    pickandplace("orange","left side of middle shelf")
--------- Example Input --------
    Objects: ["milk", "cheese", "apple", "melon"]
    Locations: ["top shelf", "middle shelf", "bottom shelf"]
    Initial State: {
      "top shelf": ["yogurt", "butter"],
      "middle shelf": ["watermelon", "pizza box"]
    }
    Preference: "I don't want any of the fridge shelves to be too crowded."
--------- Example Response --------
    pickandplace("milk", "top shelf")
    pickandplace("cheese", "top shelf")
    pickandplace("apple", "middle shelf")
    pickandplace("melon", "middle shelf")
--------- Example Input --------
    Feedback: The previous plan was geometrically infeasible. The items that did not fit were ["melon", "apple"].
    Objects: ["milk", "cheese", "apple", "melon"]
    Locations: ["top shelf", "middle shelf", "bottom shelf"]
    Initial State: {
      "top shelf": ["yogurt", "butter"],
      "middle shelf": ["watermelon", "pizza box"]
    }
    Preference: "I don't want any of the fridge shelves to be too crowded."
--------- Example Response --------
    # Reflect: The melon and apple did not fit in the previous plan. This must mean that the middle shelf is too crowded.
    # Reasoning: Instead of placing the apple and melon on the middle shelf, they can be placed on the bottom shelf which is empty.
    pickandplace("milk", "top shelf")
    pickandplace("cheese", "top shelf")
    pickandplace("apple", "bottom shelf")
    pickandplace("melon", "bottom shelf")
\end{lstlisting}

\subsection{Baseline Prompts\label{app:baseline_prompts}}

\subsubsection{\candllmqa{} Prompts}
\candllmqa{} use the same prompts as \apricot{} to generate candidate preferences (Sec~\ref{app:pmt_gen_pref}) and plans (Sec~\ref{app:tp_prompts}). 
Below is the prompt that it used to determine whether to terminate, what is the best question to ask if not terminating, and what is the best preference out of the candidate preference list if terminating. 
We use \texttt{gpt-4o}.

\definecolor{lightgray}{gray}{0.9}
\definecolor{darkgray}{gray}{0.4}
\definecolor{purple}{rgb}{0.58,0,0.82}

\lstdefinelanguage{plan}{
  sensitive=true,
  morecomment=[l]{//},
}

\lstset{
  language=plan,
  basicstyle=\ttfamily,
  keywordstyle=\color{blue},
  commentstyle=\color{darkgray},
  stringstyle=\color{purple},
  showstringspaces=false,
  columns=fullflexible,
  backgroundcolor=\color{lightgray},
  frame=single,
  breaklines=true,
  breakindent=0pt,
  postbreak=\mbox{{$\hookrightarrow$}\space},
  escapeinside={(*@}{@*)}
}

\begin{lstlisting}
--------- System Message --------
You are a helpful, thoughtful assistant whose task is to select the user's preference from the demonstrations that the user provides and asking users for yes/no clarification question. 
--------- Instruction --------
  # Input
  You are giving a list of preferences, a new initial condition of the fridge that your generated preference will get used at, and chat log of what you have asked so far. 

  ## Preferences
  You are given a list of preferences that show how the user wants their fridge organized.
  - A preference is a short paragraph that specifies requirements for each category of grocery items.
  - There will be at least one requirement for each category. The type of requirement for each category can be different.


  ## New initial state of the fridge
  This is the new initial state that your preference will get used to determine how a new set of objects will get placed into the fridge and satisfy the user's preference.

  ## Objects to put away
  This is the list of object that will get put away in this new initial state of the fridge.

  ## Chat log
  This is a json that contains a history of your thought and your conversation with the user.
  - When the "role" is "thought", this is your internal reasoning about what you were going to do. The user never sees your thought.
  - When the "role" is "assistant", this is the yes/no clarification question that you ask the user about. 
  - When the "role" is "user", this is the yes/no answer that the user provided to your yes/no clarification question.

  ## Can you still ask question?
  There is a maximum number of questions that you can ask. This field will tell you if you can still ask questions or if you must terminate and generate your best guess at the user's preference.

  # Goal
  Your goal is to select the best of the given preference in how the grocery items should be placed into the fridge. You can do this inference by analyzing the given preferences and asking users yes/no clarification questions. Doing this process, you can choose betwwen 2 actions: 
    (1) Ask the user a yes/no clarification question
    (2) Select the index of the best preference from the list and terminate

  # Instructions and useful information
  ## Set up of the fridge
  The fridge can be segmented into 3 shelves (top shelf, middle shelf, bottom shelf); each shelf has 2 sides (left side, right side):
  - left side of the top shelf, right side of the top shelf
  - left side of the middle shelf, right side of the middle shelf
  - left side of the bottom shelf, right side of the bottom shelf

  ## Concrete steps that you must follow
  To successfully determine the user's preference, you must do the following steps.
  ### Step 1: Reasoning about what actions to do
  You can choose between 2 actions: 
  (1) Ask the user a yes/no clarification question
  (2) Select the best preference, output its index, and terminate

  You must follow these rules when you are deciding what to do:
  - If you are still uncertain about the preference, you should ask an informative yes/no clarification question. 
  - If you are 100% confident about the preference, you can select which you think the right preference is and terminate.

  ### Step 2: Take the action that you decided to do
  You must reply in a valid json format: 
  ```json
  {
    "terminate? (yes/no)": "<you must only reply yes or no>",
    "reasoning": "<you must write down the reasoning behind either why you generated a specific question (if you choose the first action) or why you think the user's preference is what you are generating (if you choose the second action)>",
    "question": "<you must only fill out this string if you chose action (1) ask yes/no clarification question. Otherwise, you must leave this as an empty string.>",
    "index_of_best_preference": <you must only fill out this if you chose action (2) select index of best preference. Otherwise, you must leave this as null.>
  }
  ```

  #### If you chose (1) Ask the user a yes/no clarification question in Step 3
  You must follow these rules when coming up with questions to ask:
  - You must ask simple yes or no questions. 
  - Your questions cannot ask if the user prefers A or B. However, you can ask if the user prefers A rather than B.
  - Your questions must not be about overall objects or items. You must never ask questions like "Do you prefer to items to be ...?" or "Do you prefer items of the same category ...?" 
  - Your question must be about specifc categories or type of objects. The categories are for example: ["fruits", "vegetables", "dairy product", "condiments", "juice and drinks"]
  - Your output must have "index_of_best_preference": null

  Remember that you must reply in a valid json format:
  ```json
  {
    "terminate? (yes/no)": "no",
    "reasoning": "<you must write down the reasoning behind why you generated a specific question (because you choose the first action)>",
    "question": "<you must fill out this string because you chose action (1) ask yes/no clarification question.>",
    "index_of_best_preference": null
  }
  ```

  #### If you chose (2) Select the best preference index
  You must follow these rules.
  - You must now terminate so you should answer "yes" under the key "terminate? (yes/no)"
  - You must output the index of the best preference, with 0 referring to the first listed preference

  Remember that you must reply in a valid json format:
  ```json
  {
    "terminate? (yes/no)": "yes",
    "reasoning": "<you must write down the reasoning behind why you think the user's preference is what you are selecting (because you choose the second action)>",
    "question": "",
    "index_of_best_preference": <you must output a 0-indexed integer corresponding to the preference you picked from the given preference list because you chose action (2) select best preference.>
  }
  ```
\end{lstlisting}

\subsubsection{\llmqa{} Prompts}
\llmqa{} only uses the prompt below to determine whether to terminate, determine what is the best question to ask if not terminating, and generate the preference based on demonstrations and its queries with the user if terminating. 
We use \texttt{gpt-4o}.

\subsubsection{\noninteract{} Prompts}
\noninteract{} uses the prompt below to generate one preference from demonstrations, and it uses the same prompt as \apricot{} to generate the task plan.
We use \texttt{gpt-4} because this is a complex task to reason about all the demonstrations.

\definecolor{lightgray}{gray}{0.9}
\definecolor{darkgray}{gray}{0.4}
\definecolor{purple}{rgb}{0.58,0,0.82}

\lstdefinelanguage{plan}{
  sensitive=true,
  morecomment=[l]{//},
}

\lstset{
  language=plan,
  basicstyle=\ttfamily,
  keywordstyle=\color{blue},
  commentstyle=\color{darkgray},
  stringstyle=\color{purple},
  showstringspaces=false,
  columns=fullflexible,
  backgroundcolor=\color{lightgray},
  frame=single,
  breaklines=true,
  breakindent=0pt,
  postbreak=\mbox{{$\hookrightarrow$}\space},
  escapeinside={(*@}{@*)}
}

\begin{lstlisting}
--------- System Message --------
You are an assistant who sees someone demonstrating how the fridge is organized and summarizes that person's preference.
--------- Instruction --------
  # Input
  You are given 2 demonstrations that show the before and after when a set of objects gets put into the fridge. For each demonstration:
  - "Objects that got put away" describes the objects that the user will demonstrate how they would like to put in the fridge.
  - "Initial state of the fridge" describes the objects that are initially in the fridge before the user starts the demonstration. 
  - "Final state of the fridge" describes what the fridge looks like after the demonstration. All the objects in "Objects that got put away" should be in the fridge now. 

  # Goal
  Your goal is to generate one preferences that are consistent with the demonstrations and explain what the user want. 

  # Instructions and useful information
  ## Specific locations in the fridge
  The fridge can be segmented into 3 shelves (top shelf, middle shelf, bottom shelf); each shelf has 2 sides (left side, right side). The fridge has 6 specific locations:
  - left side of the top shelf, right side of the top shelf
  - left side of the middle shelf, right side of the middle shelf
  - left side of the bottom shelf, right side of the bottom shelf

  ## Details about the preferences that you need to output
  A preference is a short paragraph that specifies requirements for each category of grocery items. There must be at least one requirement for each category. The type of requirement for each category can be different. The categories are: "Fruits", "Vegetables", "Juice-and-soft-drinks", "Dairy-Products", and "Condiments".  

  The requirement needs to be one of the following:
  - **Type-1. Specific Locations.** These represent that the object must place at this specific location. The options are: 
    - "left side of top shelf"
    - "right side of top shelf"
    - "left side of middle shelf"
    - "right side of middle shelf"
    - "left side of bottom shelf"
    - "right side of bottom shelf".
  - **Type-2. General Locations.** These are vaguer locations that contain multiple specific locations. The options are:
    - "left side of fridge"
    - "right side of fridge"
    - "top shelf"
    - "middle shelf"
    - "bottom shelf"
  - **Type-3. Relative Positions.** The options are: 
    - "<category> must be placed together next to existing <category> regardless of which shelf they are on."
    - "<category> must be placed on the same shelf next to <another category of objects>, and which specific shelf does not matter."

  In addition to giving specific requirements for each category of grocery items, sometimes you may choose to add additional requirements. The options are:
  - **Type-4. Exception For Attribute**
    - "<category> needs to be placed at <specific location 1>, but <attribute of category> needs to be placed at <specific location 2>." An attribute includes a subcategory of the object, the size/weight of the object, a specific feature of the object, etc.
  - **Type-5. Conditional On Space**
    - "If there are less than <N> objects at <priminary specific location>, I want <category> to be placed at <priminary specific location>. Else, I want <category> to be placed at <second choice specific location>."

  Now, you need to generate 1 preference as a jsons. 
  - Each preference must be in natural language.
  - Each preference must contain at least one placement requirement for each category of objects: "fruits", "vegetables", "drinks", "dairy product", and "condiments".

  You must refer to the demonstrations and output the preferences in this format
  ```json
  {
    "reasoning": "<You must explain your thought process behind generating this preference.>",
    "preference": "<You must write the preference in natural language here>"
  }
  ```
\end{lstlisting}

\end{document}